\theoremstyle{plain}
\newtheorem{theorem}{Theorem}[section]
\newtheorem{lemma}[theorem]{Lemma}
\newtheorem{corollary}[theorem]{Corollary}
\theoremstyle{definition}
\newtheorem{definition}[theorem]{Definition}
\theoremstyle{remark}
\icmltitlerunning{A Consistent Lebesgue Measure for Multi-label Learning}
\begin{document}

\twocolumn[
\icmltitle{A Consistent Lebesgue Measure for Multi-label Learning}



\icmlsetsymbol{equal}{*}

\begin{icmlauthorlist}
\icmlauthor{Kaan Demir}{}
\icmlauthor{Bach Nguyen}{}
\icmlauthor{Bing Xue}{}
\icmlauthor{Mengjie Zhang}{}
\end{icmlauthorlist}


\icmlcorrespondingauthor{Kaan Demir}{demirkaan@ecs.vuw.ac.nz}

\icmlkeywords{multi-label, consistency, surrogate-free optimisation}

\vskip 0.3in
]




\begin{abstract}
Multi-label loss functions are usually non-differentiable, requiring surrogate loss functions for gradient-based optimisation. The consistency of surrogate loss functions is not proven and is exacerbated by the conflicting nature of multi-label loss functions. To directly learn from multiple related, yet potentially conflicting multi-label loss functions, we propose a \textit{Consistent Lebesgue Measure-based Multi-label Learner} (CLML) and prove that CLML can achieve theoretical consistency under a Bayes risk framework. Empirical evidence supports our theory by demonstrating that: (1) CLML can consistently achieve state-of-the-art results; (2) the primary performance factor is the Lebesgue measure design, as CLML optimises a simpler feedforward model without additional label graph, perturbation-based conditioning, or semantic embeddings; and (3) an analysis of the results not only distinguishes CLML's effectiveness but also highlights inconsistencies between the surrogate and the desired loss functions.
\end{abstract}

\section{Introduction}
    In multi-label data, instances are associated with multiple target labels simultaneously. Multi-label learning is an important paradigm applicable to many real-world domains such as tabulated learning \citep{wu2017c2ae, bai2021mpvae, hang2022collaborative, lu2023distribution}, functional genomics \citep{patel2022modeling}, and computer vision \citep{Wang_He_Li_Long_Zhou_Ma_Wen_2020}. 
    Deep learning is responsible for many modern advancements in multi-label learning problems \citep{zhou2021deep, LIU2023103964}. 

    However, multi-label learning is usually considered challenging due to its complex label interactions. Techniques to superimpose label interactions on the weights of a deep learning model and condition networks against non-informative features via feature perturbation have been the primary concern of works such as \citet{Wang_He_Li_Long_Zhou_Ma_Wen_2020,hang2022dual, hang2022collaborative,wu2017c2ae, you2020cross, hang2022collaborative, yuan2023graph}, and \citet{yuan2023graph}. Generally speaking, deep learning for multi-label learning is dominated by computer vision methods, therefore drawing focus away from tabulated problems, an important area of multi-label learning \citep{wu2017c2ae, bai2021mpvae, patel2022modeling, hang2022collaborative, hang2022dual, lu2023distribution}.

    Multi-label learning is challenging due to the complexity of the output space. No existing loss function can quantify the quality of a label set universally. To exemplify this, consider the following loss functions: hamming loss, one minus the label ranking average precision, and one minus the micro-averaged $F_1$-score. All three loss functions ultimately pertain to multi-label accuracy \citep{han2023survey}. However, both the interpretation of quality and the learning behaviour can vary with the loss function selected \citep{wu2020multi,liu2021emerging}. The situation is worsened by the conflicting behaviour between loss functions \citep{wu2020multi}. Further, multi-label loss functions are themselves, typically, \textit{non-convex} and \textit{discontinuous}, which can be either challenging or impossible to optimise directly \citep{pmlr-v19-gao11a}. As a result, it is common to back-propagate on gradients obtained from a manually designed and differentiable surrogate loss function \citep{rumelhart1986learning, liu2021emerging}. However, both the learning behaviour and the solution itself are prescribed by the gradients of the chosen \textit{surrogate} loss function, which might not correspond to desired behaviour according to the desired loss \citep{10177983}. Finally, multi-label learning with surrogate loss functions is not always \textit{consistent} with what they are designed to approximate \citep{pmlr-v19-gao11a,liu2021emerging}.
    
    These challenges give rise to several interesting \textit{research questions}. First, how can a model learn directly from non-convex, discontinuous, or even non-differentiable loss functions without surrogates to avoid inconsistency? Second, how can a single model learn using \textit{multiple} related, yet potentially conflicting loss functions, thus achieving robust performance on a variety of tasks? Third, can such a method achieve theoretical consistency in the context of multi-label learning? Addressing the above questions is paramount to progressing the field of multi-label learning.

    These three important research questions motivate the design of a
    \textit{Consistent Lebesgue Measure-based Multi-label Learner} (\textit{CLML}), offering \textit{several advantages}. First, CLML learns from multiple related, yet potentially conflicting, loss functions using a single model. Second, CLML learns to solve the problem \textit{without} the use of a surrogate loss function. Third, our experimental findings demonstrate that CLML consistently achieves a $13.79\%$ to $58.33\%$ \textit{better} critical distance ranking against competitive state-of-the-art methods on a variety of loss functions and datasets. The empirical results are supported by our theoretical foundation that \textit{proves} the consistency of CLML when optimising several multi-label loss functions. The importance of CLML's approach is accentuated by its \textit{simple} representation, validating the importance of a consistent loss function for multi-label learning. Finally, our analysis of the optimisation behaviour suggests that CLML can consistently navigate the desired multi-dimensional loss landscape while naturally understanding and accounting for their trade-offs.

    The \textit{major contributions} of this work are as follows: (1) a novel approach to achieving multi-label learning with multiple loss functions; (2) a novel learning objective for several non-convex and discontinuous multi-label loss functions without the use of a surrogate loss function; (3) a proof showing that our method can theoretically achieve consistency; (4) results demonstrating that CLML with a simpler feedforward model representation can consistently achieve state-of-the-art results on a wide variety of loss functions and datasets; (5) analysis to solidify the importance of consistency in multi-label learning; and (6) an analysis that highlights how CLML can naturally consider the trade-offs between desired multi-label loss functions and the inconsistency between surrogate and desired loss functions.

    \section{Related works}
    \subsection{Multi-label learning}
    Multi-label learning is a common problem, including computer vision \citep{you2020cross,Wang_He_Li_Long_Zhou_Ma_Wen_2020, zhou2021deep, yuan2023graph, LIU2023103964}, functional genomics \citep{patel2022modeling}, and tabulated learning \citep{wu2017c2ae, bai2021mpvae, hang2022collaborative, hang2022dual, lu2023distribution}. Earlier work on multi-label learning transformed a multi-label problem into a series of single-label problems \citep{read2011classifier, liu2021emerging}. However, such a transformation does not typically perform well, as valuable label interactions are lost. Consequently, deep learning has advanced the multi-label learning field \citep{liu2017deep}. Researchers have followed this trend, exploiting feature interactions using self-attention mechanisms on transformers \citep{xiao2019label}, and deep latent space encoding of features and labels using auto-encoders (C2AE) \citep{wu2017c2ae}.
    
    Recently, dual perspective label-specific feature learning for multi-label classification (DELA) \citep{hang2022dual} and collaborative learning of label semantics and deep label-specific features (CLIF) \citep{hang2022collaborative} have significantly advanced the tabulated multi-label learning field. DELA trains classifiers that are robust against non-informative features using a stochastic feature perturbation framework, while CLIF exploits label interactions using label graph embedding. Both DELA and CLIF have achieved state-of-the-art results in multi-label learning, significantly outperforming many previously state-of-the-art works such as LIFT \citep{zhang2014lift}, LLSF \citep{huangllsf2015}, JFSC \citep{huang2017joint}, C2AE \citep{wu2017c2ae}, TIFS \citep{ma2020topic}, and MPVAE \citep{bai2021mpvae}.

    \subsection{Consistency} Multi-label classification metrics are non-differentiable and tend to vary (and in some cases conflict) in the interpretation of quality \citep{pmlr-v19-gao11a,liu2021emerging,wu2020multi}. This problem persists across all domains of multi-label learning, and therefore, all of the existing deep-learning methods that utilise gradient descent require a differentiable surrogate objective function, tending to approximate the desired classification performance without consistency guarantees \citep{pmlr-v19-gao11a}. Consistency in multi-label learning has been investigated in prior works \citet{pmlr-v19-gao11a} and \citet{wu2020multi}, both showing that partial approximations of a subset of loss functions are possible, raising questions regarding the feasibility of a consistent loss approximation, which remains an open-ended research question. A robust multi-label model should account for the inconsistencies between the multiple related, yet potentially conflicting, loss functions in multi-label learning by directly optimising them, as opposed to by approximation \citep{pmlr-v19-gao11a, wu2020multi, liu2021emerging}. In this paper, we are motivated to address this critical consistency issue using a Lebesgue measure-based approach to directly optimise a set of multi-label loss functions.

     \begin{figure*}[!ht]
    \begin{minipage}{0.48\linewidth}
	\centering
    \scalebox{0.7}{
        \begin{tikzpicture}[scale=0.75,block/.style={rectangle,draw,minimum width=0.75cm,text centered},longblock/.style={rectangle,draw,minimum width=2.5cm, text centered}]

        \node[longblock,color=black!5,fill=black!5,text=black, minimum width=1cm,rounded corners=5pt] at (-3.5, 4.5) (modelf){$f$};

        \node[longblock,color=black!5,fill=black!5,text=black,minimum width=6cm,minimum height=5.5cm,rounded corners=5pt] at (1, 4.5) (background){};
        \node[longblock,color=blue!20,fill=blue!20,text=black, minimum width=3cm,rounded corners=5pt] at (0, 0) (inputdata){\textbf{X}};
        \node[longblock,color=orange!20,fill=orange!20,,text=black, minimum width=3cm,rounded corners=5pt] at (0,1.5) (encode0){Input Embedding};
        \node[longblock,color=orange!20,fill=orange!20,text=black, minimum width=1cm,rounded corners=5pt] at (4,1.5) (bias0){Bias};
        \node[longblock,color=green!20,fill=green!20,text=black,  minimum width=3cm,rounded corners=5pt] at (0,3) (norm1){Standardise};
        \node[longblock,color=orange!20,fill=orange!20,text=black, minimum width=3cm,rounded corners=5pt] at (0,4.5) (feedforward){Feed Forward};
        \node[longblock,color=orange!20,fill=orange!20,text=black, minimum width=1cm,rounded corners=5pt] at (4,4.5) (bias1){Bias};
        \node[longblock,color=green!20,fill=green!20,text=black,  minimum width=3cm,rounded corners=5pt] at (0,6) (norm2){Standardise};
        \node[longblock,color=orange!20,fill=orange!20,text=black,  minimum width=3cm,rounded corners=5pt] at (0,7.5) (out){Output Decoding};
        \node[longblock,color=orange!20,fill=orange!20,text=black, minimum width=1cm,rounded corners=5pt] at (4,7.5) (bias2){Bias};
        \node[longblock,color=blue!20,fill=blue!20,text=black,  minimum width=1cm,rounded corners=5pt] at (1,9) (pred){$\hat{\textbf{Y}}$};
        \node[longblock,color=blue!20,fill=blue!20,text=black,  minimum width=1cm,rounded corners=5pt] at (-1,9) (labels){$\textbf{Y}$};
        
        \draw[->,out=90,in=-90, line width=1.5pt] (inputdata.north) to (encode0.south);
        \draw[->,out=90,in=-90, line width=1.5pt] (encode0.north) to (norm1.south);
        \draw[->,out=90,in=-90, line width=1.5pt] (norm1.north) to (feedforward.south);
        \draw[->,out=90,in=-90, line width=1.5pt] (feedforward.north) to (norm2.south);
        \draw[->,out=90,in=-90, line width=1.5pt] (norm2.north) to (out.south);
        \draw[->,out=90,in=-90, line width=1.5pt] (out.north) to (pred.south);

        \draw[->,out=180,in=0, line width=1pt] (bias0.west) to (encode0.east);
        \draw[->,out=180,in=0, line width=1pt] (bias1.west) to (feedforward.east);
        \draw[->,out=180,in=0, line width=1pt] (bias2.west) to (out.east);

        \node[longblock,color=red!20,fill=red!20,text=black, minimum width=1cm,rounded corners=5pt] at (-3,11) (l1){$\mathcal{L}_1(f(\textbf{X}),\textbf{Y})$};
        \node[longblock,color=red!20,fill=red!20,text=black, minimum width=1cm,rounded corners=5pt] at (0,11) (l2){$\mathcal{L}_2(f(\textbf{X}),\textbf{Y})$};
        \node[longblock,color=red!20,fill=red!20,text=black, minimum width=1cm,rounded corners=5pt] at (3,11) (l3){$\mathcal{L}_3(f(\textbf{X}),\textbf{Y})$};

        \node[longblock,color=red!20,fill=red!20,text=black, minimum width=1cm,rounded corners=5pt] at (0,12.5) (lebesgue){Lebesgue Measure};

        \draw[->,out=90,in=-90, line width=1pt] (labels.north) to (l1.south);
        \draw[->,out=90,in=-90, line width=1pt] (labels.north) to (l2.south);
        \draw[->,out=90,in=-90, line width=1pt] (labels.north) to (l3.south);

        \draw[->,out=90,in=-90, line width=1pt] (pred.north) to (l1.south);
        \draw[->,out=90,in=-90, line width=1pt] (pred.north) to (l2.south);
        \draw[->,out=90,in=-90, line width=1pt] (pred.north) to (l3.south);

        \draw[->,out=90,in=-90, line width=1pt] (l1.north) to (lebesgue.south);
        \draw[->,out=90,in=-90, line width=1pt] (l2.north) to (lebesgue.south);
        \draw[->,out=90,in=-90, line width=1pt] (l3.north) to (lebesgue.south);

        \end{tikzpicture}
    }
    \centerline{(a) Representation of $f$}
    \end{minipage}
    \hfill
    \begin{minipage}{0.48\textwidth}
        \begin{minipage}{1\textwidth}
            \centering
            \includegraphics[width=0.7\textwidth]{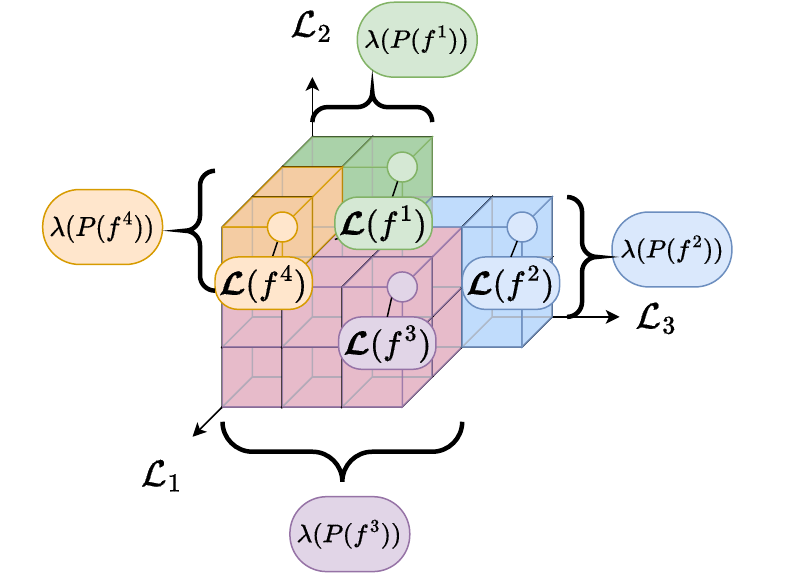}
            \centerline{(b) Lebesgue contribution of each $f^i$}
        \end{minipage}
        \vspace{10pt}
        \vfill
        \begin{minipage}{1\textwidth}
            \centering
            \includegraphics[width=0.7\textwidth]{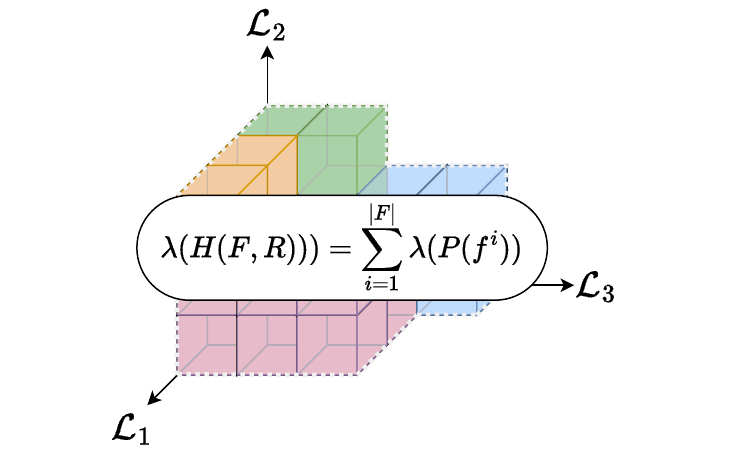}
        \centerline{(c) Lebesgue measure over $F$ and $R$}
        \end{minipage}
        \hfill
    \end{minipage}
    
    \caption{The overall proposed approach of CLML is outlined as follows. (a) illustrates the representation of $f$. (b) illustrates the contribution of each $f^i$ toward the improvement over all three loss functions $\boldsymbol{\mathcal{L}}(f^i) = (\mathcal{L}_1(f^i),\mathcal{L}_2(f^i),\mathcal{L}_3(f^i))$, which is quantified as the non-overlapping volume of space that $\boldsymbol{\mathcal{L}}(f^i)$ uniquely covers over a set of $W$ models $F=\cup_{i=1}^{W}\{f^i\}$, and a reference vector $R=\{1\}^3$. (c) illustrates the overall Lebesgue measure over $F$, which is the aggregate volume of all $f^i\in F$.}
    \label{attention}
    \vspace{-5pt}
    \end{figure*}

    \subsection{Multi-objective optimisation}

    Multi-objective optimisation has been \textit{overwhelmingly} applied to address certain areas of multi-label learning such as feature selection \cite{yin2015multi,dong2020many}. Thus, applying multi-objective optimisation to direct the learning of a classifier has seldom been addressed. In the single-label learning scenario, applying multi-objective classification methods such as \cite{zhao2018multi} to contemporary multi-label problems is impractical due to the genetic algorithm-based search, which is more suited to combinatorial optimisation than turning parameters for a deep learning model. Furthermore, multi-objective optimisation has been framed as multi-task optimisation in \cite{NEURIPS2018_432aca3a}, however, the problem is formulated to optimise a set of differentiable objective functions, reintroducing challenges concerning the consistency issues between the surrogate and desired objective functions, thereby posing a potential setback.
    
    A multi-objective fuzzy genetics-based multi-label learner was proposed in \cite{omozaki2022evolutionary} to address a balance in optimising accuracy-complexity tradeoffs in explainable classifier design. Fuzzy systems are known to be computationally expensive and ineffective at scale, greatly limiting their applicability to contemporary problems \cite{fazzolari2012review}. Nevertheless, multi-objective multi-label classification has been addressed in other works such as \cite{shi2014multi} to train radial basis functions using a genetic algorithm-based multi-objective framework. Notably, these efforts demonstrated promising performance when compared to traditional algorithms of the time. However, the quadratic computational complexity (scaling with the number of samples) limits the method's applicability to large-scale multi-label learning problems, as computation quickly becomes infeasible as the number of samples increases. 
    
    Existing research demonstrates the critical shortcomings of multi-label learning research concerning the direct optimisation of multiple conflicting loss functions, motivating the need for a deep-learning-based scalable, efficient, and effective approach that can address multi-label learning with \textit{proven} consistency.

    \section{The CLML approach}
    
    \subsection{Notations for multi-label classification}
    Multi-label learning is a supervised classification task, where an instance can be associated with multiple class labels simultaneously. Let $\mathcal{X}\in\mathbb{R}^D$, $\mathcal{Y}\in\{0,1\}^K$, and $\Omega\in\mathbb{R}^L$ respectively denote the input, output, and learnable parameter space for $D$ features, $K$ labels, and $L$ parameters. Let $\mathcal{P}$ be a joint probability distribution of samples over $\mathcal{X}\times\mathcal{Y}$. Let $f:\mathbb{R}^{D}\to \mathbb{R}^{K}$ represent a deep neural network drawn from $\Omega\in \mathbb{R}^{L}$, and trained on $N$ samples drawn from $\mathcal{P}$. An input vector $\textbf{x}\in\mathcal{X}$, where $\mathcal{X}\in\mathbb{R}^D$, can be associated with an output vector that is a subset of $\mathcal{Y}\in\{0,1\}^K$, \textit{i.e.}, $\textbf{y}=\{y_1,...,y_K\}$, where $y_l=1$ if label $l$ is associated with $\textbf{x}$, and is otherwise zero. We define the input feature and label data as $\textbf{X}\in\mathcal{X}^N$ and $\textbf{Y}\in\mathcal{Y}^N$, respectively. Given $\textbf{x}\in\mathcal{X}$, we denote $p(\textbf{y}|\textbf{x})_{\textbf{y}\in\mathcal{Y}}$ as the conditional probability of \textbf{y}. We additionally define $\kappa$ as the set of all conditional probabilities:
    \begin{equation}
        \kappa = \{p(\textbf{y}|\textbf{x}):\sum_{\textbf{y}\in\mathcal{Y}}p(\textbf{y}|\textbf{x})=1 \wedge p(\textbf{y}|\textbf{x})\geq 0\}.
    \end{equation}
    and the conditional risk of $f$ given surrogate loss ($\psi$), loss ($\mathcal{L}$), the conditional probability of sample $\textbf{x}$ and the label set $\textbf{y}$:
    \begin{equation}
    \begin{gathered}
         \mathcal{L}^{c}(p(\textbf{y}|\textbf{x}), f)=\sum_{\textbf{y}\in \mathcal{Y}}p(\textbf{y}|\textbf{x})\mathcal{L}(f(\textbf{x}),\textbf{y})\\
          \psi^c(p(\textbf{y}|\textbf{x}), f)=\sum_{\textbf{y}\in \mathcal{Y}}p(\textbf{y}|\textbf{x})\psi(f(\textbf{x}),\textbf{y}).
    \end{gathered}
    \end{equation}
    
    \subsection{The representation of CLML}

    Throughout this paper, we use a standard feedforward model to represent $f$, which is illustrated in Figure \ref{attention}. However, in comparison to a standard feedforward neural network, our model takes matrices as inputs and outputs, rather than individual vectors. This is due to the tabular nature of the data, allowing us to handle all samples simultaneously. First, the encoding layer $\textbf{E}:\mathbb{R}^{N\times D}\to \mathbb{R}^{N\times C}$ with bias $\textbf{W}_b^{\textbf{E}}$, compresses the input signal to $C$ embedding dimensions, where $C<<D$. An ablation study of $C$ is given in Section \ref{ablation}. Note that positional encoding is not required due to the tabulated nature of the data \citep{vaswani2017attention}. The compressed input is then row-standardised ($\gamma$) before being passed through a feedforward layer (\textbf{L}) with weights $\textbf{W}^{\textbf{L}}$ and bias $\textbf{W}_b^\textbf{L}$. We repeat standardisation before passing each row to the decoder $\textbf{D}:\mathbb{R}^{N\times C}\to \mathbb{R}^{N\times K}$ with bias $\textbf{W}_b^{\textbf{D}}$. The full equation for generating the prediction matrix $\hat{\textbf{Y}}$ is given by:
    \begin{equation}
        \hat{\textbf{Y}} = \sigma(\sigma(\gamma(\sigma(\gamma(\textbf{X}\textbf{E}+\textbf{W}_b^{\textbf{\textbf{E}}}))\textbf{W}^{\textbf{L}}+\textbf{W}_b^{\textbf{L}}))\textbf{D}+\textbf{W}_b^{\textbf{D}}).
    \end{equation}
    We apply a sigmoid activation function ($\sigma$) after each layer (and in particular the output layer, wherein a softmax is \textit{not} appropriate for multi-label data). The sigmoid function ensures bounded activations, which suits CLML's shallow and matrix-based representation. Furthermore, activation functions such as ReLU and GELU are more tailored to deeper architectures and address specific issues such as vanishing gradients, which is not as relevant in this paper. Tight-bound complexity scales \textit{linearly} with the parameters in the encoding $\Theta(NDC)$ and decoding stages $\Theta(NKC)$, and quadratically with the parameters of the feedforward step $\Theta(NC^2)$. Note the complexity assumes a naive implementation of matrix multiplication. We deliberately choose a simpler and shallow model to demonstrate the effectiveness of the consistent Lebesgue measure, described in the following subsection.
    
    \subsection{A Lebesgue measure for surrogate-free multi-label learning}
    The Lebesgue measure is widely used for multi-criteria, multi-task, and multi-objective problems \citep{igel2007covariance,bader2010faster,bader2011hype}. In comparison to traditional multi-objective methods such as dominance-based sorting, the Lebesgue measure can be expressed analytically, which is both useful for proving its consistency and applicable for future research to investigate its differentiation. We assume the learning task maps a batch (\textit{i.e.}, matrix) of input vectors $\textbf{X}^{N\times D}$ toward a batch of target labels $\textbf{Y}^{N\times K}$. Let $\boldsymbol{\mathcal{L}}(f(\textbf{X}),\textbf{Y}):\mathbb{R}^{N\times D}\to \mathbb{R}^o$ be a series of loss functions that map $\textbf{X}$ to a vector of losses $\boldsymbol{\mathcal{L}}(f(\textbf{X}),\textbf{Y}) = (\mathcal{L}_1(f(\textbf{X}),\textbf{Y}),\cdots,\mathcal{L}_o(f(\textbf{X}),\textbf{Y}))$ given the objective space $Z\subseteq \mathbb{R}^o$ and a representation of a neural-network $f$. Given $o=3$, let $\mathcal{L}_1, \mathcal{L}_2,$ and $\mathcal{L}_3$ respectively, without loss of generality, represent the following widely-used multi-label loss functions: hamming loss, one minus label-ranking average precision, and one minus micro $F_1$, all of which should be minimised (see \citet{liu2021emerging}). Let $R\subset Z$ denote a set of mutually non-dominating\footnote{$f$ dominates $f'$ ($f\prec f'$) iff $\forall i:1 \leq i \leq o: \mathcal{L}_i(f(\textbf{X}),\textbf{Y}) \leq \mathcal{L}_i(f'(\textbf{X}),\textbf{Y})$, and $\exists \mathcal{L}_i:\mathcal{L}_i(f(\textbf{X}),\textbf{Y})<\mathcal{L}_i(f'(\textbf{X}),\textbf{Y})$} loss vectors; $F$, the set of representations of functions; and $H(F,R)\subseteq Z$, the set of loss vectors that dominate at least one element of $R$ and are dominated by at least one element of $F$:
    \begin{equation}
        \begin{gathered}
        H(F,R) := 
        \{\textbf{z} \in Z \quad | \\\quad \exists f\in F ,\quad \exists \textbf{r} \in R: \boldsymbol{\mathcal{L}}(f(\textbf{X}),\textbf{Y}) \prec \textbf{z} \prec \textbf{r}\}.\label{lebesguemeasure}
        \end{gathered}
    \end{equation}
    
    The Lebesgue measure is defined as $\lambda(H(F,R)) = 
    \int_{\mathbb{R}^o}\textbf{1}_{H(F,R)}(\textbf{z})d\textbf{z}$, where $\textbf{1}_{H(F,R)}$ is the indicator function of $H(F,R)$.  The contribution of $f$ toward the improvement (minimisation) of a set of loss functions $\boldsymbol{\mathcal{L}}$ can be quantified by first measuring the improvement of $f$ via the partition function $P(f)$:
    \begin{equation}
        P(f) = H(\{f\},R)\backslash H(F\backslash\{f\},R).
    \end{equation}
    
    Hence, the Lebesgue contribution of $f$, $\lambda(P(f))=\int_{\mathbb{R}^o}\textbf{1}_{P(f)}(\textbf{z})d\textbf{z}$, describes it's contribution to minimising $\boldsymbol{\mathcal{L}}$. Ultimately, $\lambda(H(F,R))$, and therefore $\boldsymbol{\mathcal{L}}$, is sought to be optimised via $\lambda(P(f))$ (see Lemma \ref{lebesgue}), \textit{i.e.}, $f$ is guided by evaluating its Lebesgue contribution $\lambda(P(f))$, which can be efficiently estimated using Monte Carlo sampling \citep{bader2010faster}. $R$ is initialised to the unit loss vector $\{1\}^3$. Optimisation of CLML using the Lebesgue measure is achieved via covariance matrix adaptation \citep{hansen1996adapting}, a standard non-convex optimiser \citep{smith2020revisiting, sarafian2020explicit, nomura2021warm}. Please refer to Section \ref{optimisationprocess} for an expanded technical exposition on covariance matrix adaptation (see Section \ref{covariancematrixadaptation}), Lebesgue measure estimation using Monte Carlo sampling (see Section \ref{lebesguemeasureestimation}) and the optimisation process (see Section \ref{optimisationprocedure} and Algorithm \ref{optimisationalgorithm}). Although this work focuses on tabulated learning, CLML, and its framework, can be applied to other contemporary multi-label problems in future work, which falls outside of the scope of this paper.

    \section{Consistency of the Lebesgue measure}\label{Theorem2Proof}
    There are many definitions of consistency including infinite-sample consistency in \citet{zhang2004statisticala} and edge consistency in \citet{duchi2010edgeconsistency}. In this work, we define consistency as the Bayes risk, following \citet{pmlr-v19-gao11a}. We provide the following key definitions before introducing multi-label consistency.

    \begin{definition}[Conditional Risk]
    The expected conditional risk $R$, and the Bayesian risk $R^{B}$, of a model representation $f$ given $\mathcal{L}$ is defined as:
    \begin{equation}
    \begin{gathered}
        R(f) = \mathbb{E}_{(\textbf{x},\textbf{y})\sim \mathcal{P}}[\mathcal{L}^c(p(\textbf{y}|\textbf{x}), f)]\\ 
        R^{B}(f) = \mathbb{E}_{(\textbf{x},\textbf{y})\sim \mathcal{P}}[\underset{f'}{\text{inf}}[\mathcal{L}^c(p(\textbf{y}|\textbf{x}), f')]].
    \end{gathered}
    \end{equation}
    \end{definition}
    \begin{definition}[Bayes Predictors] The set of Bayes predictors:
    
    \begin{equation}
        B(p(\textbf{y}|\textbf{x}))=\{f:\mathcal{L}^c(p(\textbf{y}|\textbf{x}),f)=\underset{f'}{\text{inf}}[\mathcal{L}^c(p(\textbf{y}|\textbf{x}),f')]\}.
    \end{equation} determine that $\psi$ can be multi-label consistent w.r.t. $\mathcal{L}$ if the following holds for every $p(\textbf{y}|\textbf{x})\in \kappa$:
    \begin{equation}
        R^B_\psi (f) <\underset{f}{\text{inf}}\{R_\psi(f):\forall f \in \Omega, f\notin B\}.
    \end{equation}
    \end{definition}
    \begin{theorem}[Multi-label Consistency]\label{theoremold} $\psi$ can only be multi-label consistent w.r.t. $\mathcal{L}$ iff it holds for any sequence of $f^{(n)}$ that:
    \begin{equation}
        R_\psi (f^{(n)}) \to R^B_\psi (f) \quad \text{then} \quad R_\mathcal{L}(f^{(n)}) \to R_\mathcal{L}^B (f).
    \end{equation}   
    \end{theorem}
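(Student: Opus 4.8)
The plan is to prove both directions of the equivalence by passing to the conditional risks and reducing the global statement to a pointwise one in $p(\mathbf{y}\mid\mathbf{x})$. First I would note that, since $R_\psi(f)=\mathbb{E}_{(\mathbf{x},\mathbf{y})\sim\mathcal{P}}[\psi^c(p(\mathbf{y}\mid\mathbf{x}),f)]$ and $R_\mathcal{L}(f)=\mathbb{E}_{(\mathbf{x},\mathbf{y})\sim\mathcal{P}}[\mathcal{L}^c(p(\mathbf{y}\mid\mathbf{x}),f)]$, it suffices to establish the implication at the level of a fixed conditional distribution and then recover the global claim by integrating. The central quantities are the two nonnegative suboptimality gaps $\Delta_\mathcal{L}(f)=\mathcal{L}^c(p(\mathbf{y}\mid\mathbf{x}),f)-\inf_{f'}\mathcal{L}^c(p(\mathbf{y}\mid\mathbf{x}),f')$ and $\Delta_\psi(f)=\psi^c(p(\mathbf{y}\mid\mathbf{x}),f)-\inf_{f'}\psi^c(p(\mathbf{y}\mid\mathbf{x}),f')$. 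With this notation, $R_\psi(f^{(n)})\to R^B_\psi(f)$ is exactly $\mathbb{E}[\Delta_\psi(f^{(n)})]\to 0$, and because $\Delta_\psi\ge 0$ this forces $\Delta_\psi(f^{(n)})\to 0$ in $L^1$ and hence pointwise along a subsequence; the desired conclusion is the analogous statement for $\Delta_\mathcal{L}$, recovered at the expectation level by dominated convergence using that the target losses are bounded on $\{0,1\}^K$.

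For the forward direction (the gap/consistency condition implies the convergence property) I would argue by contraposition on the sequence at a fixed $p(\mathbf{y}\mid\mathbf{x})$. Suppose $\Delta_\psi(f^{(n)})\to 0$ but $\Delta_\mathcal{L}(f^{(n)})\not\to 0$; then a subsequence obeys $\Delta_\mathcal{L}(f^{(n_k)})\ge\epsilon>0$, so each $f^{(n_k)}$ fails to minimise the conditional $\mathcal{L}$-risk and therefore $f^{(n_k)}\notin B(p(\mathbf{y}\mid\mathbf{x}))$. The defining strict inequality $R^B_\psi(f)<\inf\{R_\psi(f):f\notin B\}$ then supplies a fixed $\delta>0$ with $\Delta_\psi(f^{(n_k)})\ge\delta$, contradicting $\Delta_\psi(f^{(n)})\to 0$. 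The point requiring care is that the surrogate gap be \emph{uniformly} bounded away from zero over the entire $\mathcal{L}$-suboptimal set, not merely positive for each individual predictor; I would make this precise through the calibration function $H(\epsilon)=\inf\{\Delta_\psi(f):\Delta_\mathcal{L}(f)\ge\epsilon\}$ and show that $H(\epsilon)>0$ for every $\epsilon>0$ is exactly what the stated strict inequality delivers.

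For the reverse direction (the convergence property implies the gap condition) I would again use contraposition: if the strict inequality fails at some $p(\mathbf{y}\mid\mathbf{x})$, then $\inf\{R_\psi(f):f\notin B\}=R^B_\psi(f)$, so there is a sequence $f^{(n)}\notin B$ with $\Delta_\psi(f^{(n)})\to 0$. Each such $f^{(n)}$ satisfies $\Delta_\mathcal{L}(f^{(n)})>0$, and the goal is to guarantee these stay bounded away from zero, so that $\Delta_\mathcal{L}(f^{(n)})\not\to 0$ contradicts the convergence property. This last guarantee is the \textbf{main obstacle}: a priori the $\mathcal{L}$-suboptimal witnesses could drift toward Bayes-optimality in $\mathcal{L}$ while their surrogate risks approach the optimum, in which case no contradiction is produced. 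I would resolve it by organising the witnesses according to the level sets $\{f:\Delta_\mathcal{L}(f)\ge\epsilon\}$ and running a diagonal argument over $\epsilon\to 0$: the failure of the gap condition must occur at some fixed level $\epsilon_0$, from which one extracts a sequence with $\Delta_\mathcal{L}(f^{(n)})\ge\epsilon_0$ yet $\Delta_\psi(f^{(n)})\to 0$. Finally I would lift the fixed-$p(\mathbf{y}\mid\mathbf{x})$ violation to a $\mathcal{P}$-expectation violation by placing mass on the offending conditional distribution, completing the contrapositive. I expect the uniformity of the calibration function and this pointwise-to-expectation lifting to be the two delicate ingredients of the argument.
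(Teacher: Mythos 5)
First, a point of reference: the paper does not prove this theorem itself. Theorem 3.1 is imported from Gao and Zhou (2011), and the text simply defers to that reference, so your attempt can only be compared against the standard proof there. Your sketch is in the same family as the classical calibration arguments (Zhang 2004; Tewari and Bartlett 2007; Gao and Zhou 2011): reduce to conditional risks, control the two suboptimality gaps $\Delta_\psi$ and $\Delta_{\mathcal{L}}$, and pass between the pointwise and expectation levels. Your forward direction is essentially sound, though you worry about the wrong uniformity: at a fixed $p(\mathbf{y}\mid\mathbf{x})$ the stated strict inequality already \emph{is} the uniform statement over the $\mathcal{L}$-suboptimal set, since it supplies a single $\delta(p)=\inf\{\Delta_\psi(f):f\notin B\}>0$; thus $\Delta_\psi(f^{(n)})\to 0$ forces $f^{(n)}\in B$ eventually, i.e.\ $\Delta_{\mathcal{L}}(f^{(n)})=0$ eventually, which is stronger than what you need. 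The uniformity that genuinely requires care is over $p\in\kappa$ when integrating, and your subsequence-plus-dominated-convergence device (every subsequence admits a further subsequence along which the conditional surrogate gaps vanish almost everywhere, and the target losses are bounded) does circumvent the need for a calibration function uniform in $p$, provided you finish with the standard ``every subsequence has a further subsequence converging to zero'' step.

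The genuine gap is in your reverse direction. From the failure of the strict inequality at some $p$ you extract witnesses $f^{(n)}\notin B$ with $\Delta_\psi(f^{(n)})\to 0$, and you correctly identify that the argument collapses unless $\Delta_{\mathcal{L}}(f^{(n)})$ stays bounded away from zero. But your proposed fix --- a ``diagonal argument over $\epsilon\to 0$'' showing ``the failure must occur at some fixed level $\epsilon_0$'' --- is an assertion, not an argument: nothing in the definitions forces the violation to concentrate on a level set. The ingredient that actually closes this step is the discreteness of the multi-label problem: $\mathcal{Y}=\{0,1\}^K$ is finite and the target losses depend on $f(\mathbf{x})$ only through a finite set of induced decisions (label subsets, rankings), so the conditional risk $\mathcal{L}^c(p,\cdot)$ takes only finitely many values and $\Delta_{\mathcal{L}}(f)>0$ automatically implies $\Delta_{\mathcal{L}}(f)\ge\epsilon_0(p)$, the minimal positive gap. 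Without this finiteness the claimed equivalence can genuinely fail --- the witnesses could drift toward $\mathcal{L}$-optimality while their surrogate risks approach the optimum, exactly the scenario you describe --- so this is the step you must supply explicitly, and it is where the cited proof leans on the multi-label structure. Your final lifting of the fixed-$p$ violation to a distributional one by concentrating mass on the offending $\mathbf{x}$ is fine.
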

    The proof of Theorem \ref{theoremold} is available in \citet{pmlr-v19-gao11a}.
    \begin{definition}[Pareto optimal set] A Pareto optimal set of functions $\mathbb{P}^B$ contain the following functions:
    \begin{equation}
    \mathbb{P}^B = \{f: \{f': f'\prec f \quad \forall f',f \in \Omega, f' \neq f\}=\emptyset\}.
    \end{equation}   
    \end{definition}
     Recall that $f$ is said to dominate $f'$ ($f\prec f'$) iff $\forall i: 1 \leq i \leq o: \mathcal{L}_i(f(\textbf{X}),\textbf{Y}) \leq \mathcal{L}_i(f'(\textbf{X}),\textbf{Y})$, and $\exists \mathcal{L}_i:\mathcal{L}_i(f(\textbf{X}),\textbf{Y})<\mathcal{L}_i(f'(\textbf{X}),\textbf{Y})$.
    \begin{theorem}[A Consistent Lebesgue Measure]\label{theoremconsistency} Given a sequence $F^{(n)}$,
    the maximisation of the Lebesgue measure $\lambda(H(F^{(n)},R))$ is consistent with the minimisation of $\mathcal{L}_1, \mathcal{L}_2,$ and $\mathcal{L}_3$:
    \end{theorem}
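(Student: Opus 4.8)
The plan is to reduce the claim to two ingredients: (i) that maximising the Lebesgue measure drives the sequence $F^{(n)}$ onto the Pareto optimal set $\mathbb{P}^B$, and (ii) that $\mathbb{P}^B$ contains the Bayes predictors of each individual loss, so that attaining the front is equivalent to attaining the Bayes risk $R^B_{\mathcal{L}_i}$ for every $i$. Combining these, I would show that $\lambda(H(F^{(n)},R)) \to \sup_F \lambda(H(F,R))$ forces $R_{\mathcal{L}_i}(f^{(n)}) \to R^B_{\mathcal{L}_i}(f)$ for $i=1,2,3$, which is exactly the consistency conclusion of Theorem \ref{theoremold} transported to the surrogate-free Lebesgue objective.

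First I would establish the \emph{strict Pareto monotonicity} of the measure, presumably the content of Lemma \ref{lebesgue}. Using the partition decomposition $\lambda(H(F,R)) = \sum_{f\in F}\lambda(P(f))$ together with $P(f) = H(\{f\},R)\backslash H(F\backslash\{f\},R)$, a dominated $f$ contributes $\lambda(P(f))=0$, whereas a non-dominated $f$ whose loss vector lies strictly inside the box anchored at $R$ contributes strictly positive volume. Hence replacing a dominated point by a dominating one, or adding a non-dominated point, strictly increases $\lambda(H(F,R))$. It follows that any maximiser of the measure is supported exactly on $\mathbb{P}^B$: no set containing a strictly dominated loss vector can be optimal, and every element of the front must be represented to avoid leaving positive uncovered volume below $R$.

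Next I would connect the front to the Bayes risk. For each coordinate $\mathcal{L}_i$, a predictor $f^\star_i$ minimising $\mathcal{L}^c_i(p(\textbf{y}|\textbf{x}),\cdot)$ — an element of $B(p(\textbf{y}|\textbf{x}))$ for that loss — cannot be dominated, since lowering $\mathcal{L}_i$ further is impossible and any dominating point would have to match it on $\mathcal{L}_i$ while improving another coordinate, hence would itself be a valid replacement on the front. Thus each single-objective Bayes minimiser is an extreme (anchor) point of $\mathbb{P}^B$. Because the reference $R=\{1\}^3$ dominates the entire attainable region $[0,1]^3$, these anchor points carry strictly positive marginal Lebesgue contribution, so a maximising sequence must place points arbitrarily close to each of them. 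Passing to the limit and taking expectations over $\mathcal{P}$ then yields $\min_{f\in F^{(n)}} R_{\mathcal{L}_i}(f) \to R^B_{\mathcal{L}_i}(f)$ for every $i$, establishing consistency simultaneously for $\mathcal{L}_1,\mathcal{L}_2,\mathcal{L}_3$.

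The main obstacle I anticipate is bridging the \emph{batch-level} objective $\boldsymbol{\mathcal{L}}(f(\textbf{X}),\textbf{Y})$, on which the Lebesgue measure is literally computed, to the \emph{population} Bayes risks $R^B_{\mathcal{L}_i}$ defined through the expectation over $\mathcal{P}$. Making the limit $n\to\infty$ rigorous requires that the empirical loss vectors concentrate on their conditional-risk counterparts and that the monotonicity argument survives this passage; a subtlety is that the three losses genuinely conflict, so the front is a nondegenerate surface and I must argue that consistency only demands coordinate-wise attainment at the anchor points, not joint minimisation. I would handle the conflict by invoking mutual non-dominance from the definition of $R$ and treating each coordinate separately, and handle the empirical-to-population gap by using boundedness of the losses in $[0,1]^3$ so that dominated convergence applies.
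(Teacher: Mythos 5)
Your proposal is correct in its core and, for the one step the paper actually argues in detail, arrives at the same place by a slightly different route: the paper shows that every coordinate-wise Bayes predictor must lie in $\mathbb{P}^B$ by contradiction (positing $f^\gamma\notin\mathbb{P}^B$ that is Bayes for some $\mathcal{L}_v$, taking a dominating $f^\beta\in\mathbb{P}^B$, and splitting on whether the strict-inequality coordinate coincides with $v$), whereas you prove the same inclusion directly by observing that a minimiser of $\mathcal{L}_i$ cannot be strictly improved in coordinate $i$ and hence cannot be dominated. Where you genuinely diverge is in scope: the paper stops at ``$\mathbb{P}^B$ contains a Bayes predictor per loss'' and then simply asserts that maximising $\lambda(H(F^{(n)},R))$ is ``eventually consistent.'' You attempt to close that gap with two further steps --- (a) using the partition decomposition of Lemma \ref{lebesgue} to show dominated points contribute zero volume and non-dominated points contribute strictly positive volume, so any maximiser is supported on $\mathbb{P}^B$, and (b) arguing that the anchor points carry positive marginal contribution, so a sequence with $\lambda(H(F^{(n)},R))\to\lambda(H(\mathbb{P}^B,R))$ must place points arbitrarily close to each anchor, forcing $\min_{f\in F^{(n)}}R_{\mathcal{L}_i}(f)\to R^B_{\mathcal{L}_i}(f)$. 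This is a more complete argument than the paper gives, and your explicit identification of the empirical-to-population gap is well taken: the paper itself conflates batch-level domination $\mathcal{L}_k(f^\beta)<\mathcal{L}_k(f^\gamma)$ with a contradiction of the risk-level statement $R_{\mathcal{L}_v}(f^\gamma)=R^B_{\mathcal{L}_v}(f^\gamma)$, which is exactly the passage you flag as needing concentration of the empirical loss vectors.

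Two cautions on the parts you add beyond the paper. First, step (b) has an edge case: if minimising $\mathcal{L}_i$ forces some other coordinate to equal the reference value $1$ (total conflict), the box dominated by that anchor has Lebesgue measure zero, so convergence of the hypervolume does not force the sequence to approach it; the conclusion for that coordinate would then need a separate argument or an assumption that all anchors lie strictly inside $[0,1)^3$. Second, your claim that ``every element of the front must be represented'' is stronger than needed and delicate for a continuous front with finite populations $F^{(n)}$; it suffices (and is what your limit statement actually uses) that the covered volume converges, which only forces density of the attained loss vectors in the region of the front with positive dominated volume. Neither issue undermines the part of your argument that parallels the paper's proof.
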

    \begin{equation}
    \begin{gathered}
        \underset{n\to\infty}{\text{lim}}\lambda(H(F^{(n)},R))\to \lambda(H(\mathbb{P}^{B},R))\quad 
        \text{then} \\
        R_{\mathcal{L}_1}(f^{(n)}) \to R^B_{\mathcal{L}_1}(f) \wedge \\
         R_{\mathcal{L}_2}(f^{'(n)}) \to R^B_{\mathcal{L}_2}(f') \wedge \\
         R_{\mathcal{L}_3}(f^{''(n)}) \to R^B_{\mathcal{L}_3}(f'').
    \end{gathered}
    \end{equation}
    In other words, the maximisation of $\lambda(H(F^{(n)},R))$ tends to the convergence toward the Bayes risk for each loss function $\mathcal{L}_i$ $\forall i:1\leq i \leq 3$, $f^{(n)},f^{'(n)},f^{''(n)}\in F^{(n)}$ and that $f,f',f''\in\mathbb{P}^B$. 
    
    \begin{proof}[Proof of Theorem \ref{theoremconsistency}]
    
    We proceed by contradiction. Suppose the following function exists: $f^\gamma\notin \mathbb{P}^B$, $f^\gamma \in \Omega$ s.t. $\exists v: R_{\mathcal{L}_v}(f^\gamma) = R_{\mathcal{L}_v}^B(f^\gamma)$, \textit{i.e.}, $f^\gamma$ is a Bayes predictor for the $v^{th}$ loss $\mathcal{L}_v$ given $p(\textbf{y}|\textbf{x})$. Now suppose another function $f^\beta\in \Omega$ exists s.t. $f^\beta\in \mathbb{P}^B$. By this condition, $f^\beta \prec f^\gamma$ as $f^\gamma\notin \mathbb{P}^B$, hence $\forall i:1 \leq i \leq o: \mathcal{L}_i(f^\beta) \leq \mathcal{L}_i(f^\gamma)$, and $\exists \mathcal{L}_k:\mathcal{L}_k(f^\beta)<\mathcal{L}_k(f^\gamma)$. This result has two implications:

    \begin{enumerate}
        \item If $k=v$ then $\mathcal{L}_k(f^\beta)<\mathcal{L}_k(f^\gamma)$ would contradict $f^\gamma$ being a Bayes predictor. This would imply a Bayes predictor \textit{cannot exist outside} $\mathbb{P}^B$.
        \item If $k\neq v$, then $\forall i: 1 \leq i \leq o: \mathcal{L}_i(f^\beta) \leq \mathcal{L}_i(f^\gamma)$. For this condition to hold, when $i=v$, $\mathcal{L}_i(f^\beta) \leq \mathcal{L}_i(f^\gamma)$ would imply that $f^\beta$ is \textit{also} a Bayes predictor of $\mathcal{L}_i$, when there is strict equality, and implication 1 when there is inequality. Therefore, the Bayes predictor of $\mathcal{L}_i$ already exists within $\mathbb{P}^B$.
    \end{enumerate}

    The Pareto optimal set of representations therefore contains a set of Bayes predictors, one for each loss dimension. Hence, given a sequence $F^{(n)}$ (abstracting away the classifier),
    the maximisation of the Lebesgue measure $\lambda(H(F^{(n)},R))$ is eventually consistent with the minimisation of $\mathcal{L}_1, \mathcal{L}_2,$ and $\mathcal{L}_3$.
    \end{proof}

    Additional and useful definitions, corollaries, and lemmas are elaborated in Section \ref{usefuldefinitionscorollariesandlemmas} of the appendix. The following section provides significant empirical evidence to support the consistency of the Lebesgue measure.

    \section{Evaluation of the multi-label classification performance} 

    \subsection{Comparative studies}

    We compare CLML against several state-of-the-art and benchmark methods using the recommended parameter configurations in their papers. Our first comparison method is, to the best of our knowledge, the current state-of-the-art and best tabulated multi-label learner to date: dual perspective of label-specific feature learning for multi-label classification (DELA) \citep{hang2022dual}. DELA constructs classifiers that are robust against non-informative features by training on randomly perturbed label-specific non-informative features. Our second comparison method is collaborative learning of label semantics and deep label-specific features (CLIF) \citep{hang2022collaborative}. CLIF embeds label interactions using label graph isomorphism networks. Generally speaking, DELA and CLIF have achieved state-of-the-art results in comparison to many well-known tabulated multi-label learners \citep{zhang2014lift, huangllsf2015, huang2017joint, wu2017c2ae, ma2020topic, bai2021mpvae}.
    
     Our third comparison method learns deep latent spaces for multi-label classification (C2AE) \citep{wu2017c2ae}, which jointly encodes features and labels into a shared semantic space via an autoencoder network. Our fourth comparison method is ML$k$NN, which is a multi-label variant of $k$NN that estimates labels based on Bayesian inference \citep{zhang2007ml}. The final two comparison methods are based on Gaussian Naive Bayes with binary relevance (GNB-BR) and classifier chain (GNB-CC) transformations \citep{read2011classifier}.     

     \begin{figure}[!ht]
        \centering
        \begin{minipage}{0.32\textwidth}
            \includegraphics[width=0.9\textwidth]{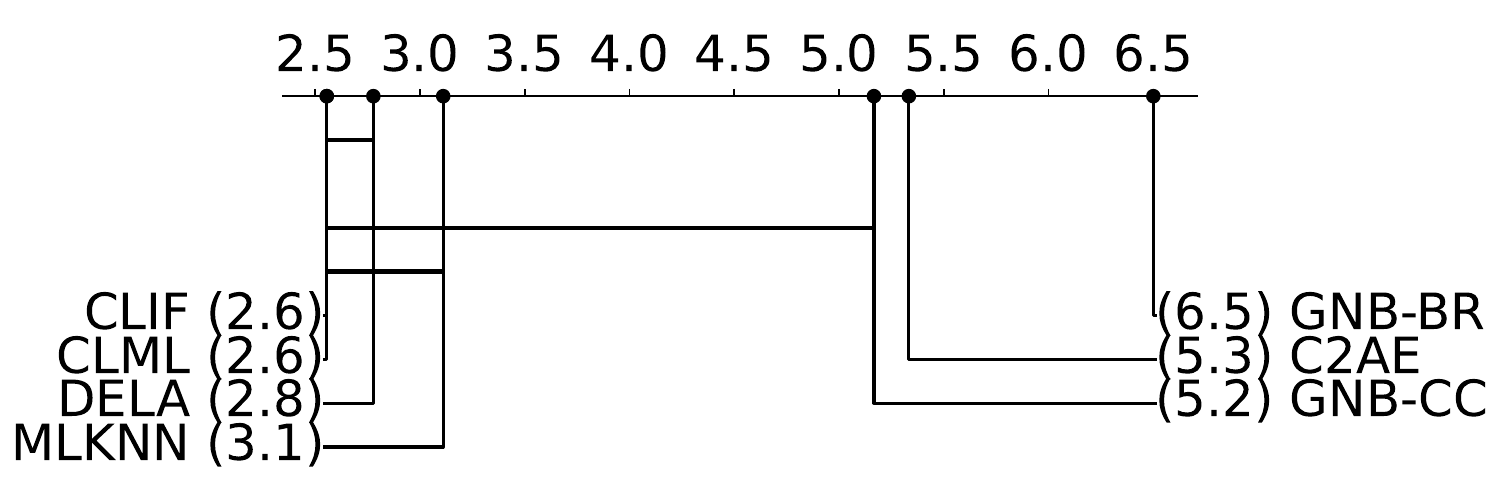}
            \centerline{(a) $\mathcal{L}_1(f)$}
        \end{minipage}
        \hfill
        \begin{minipage}{0.32\textwidth}
            \includegraphics[width=0.9\textwidth]{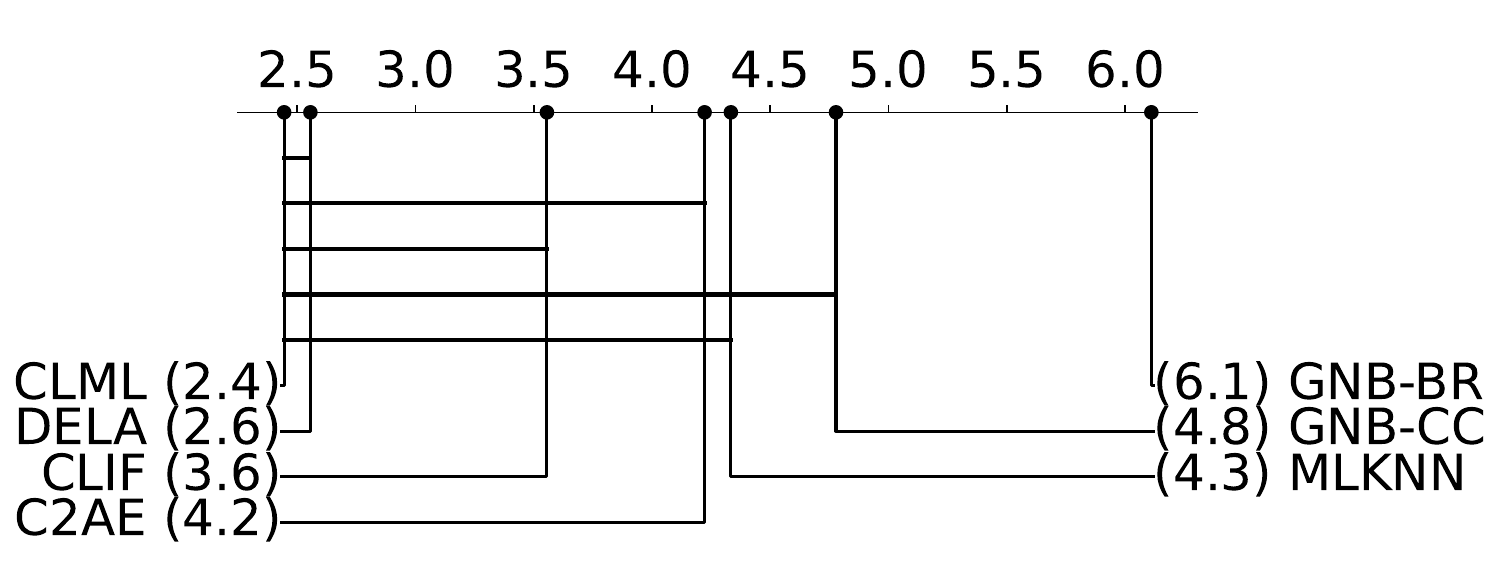}
            \centerline{(b) $\mathcal{L}_2(f)$}
        \end{minipage}
        \hfill
        \begin{minipage}{0.32\textwidth}
            \includegraphics[width=0.9\textwidth]{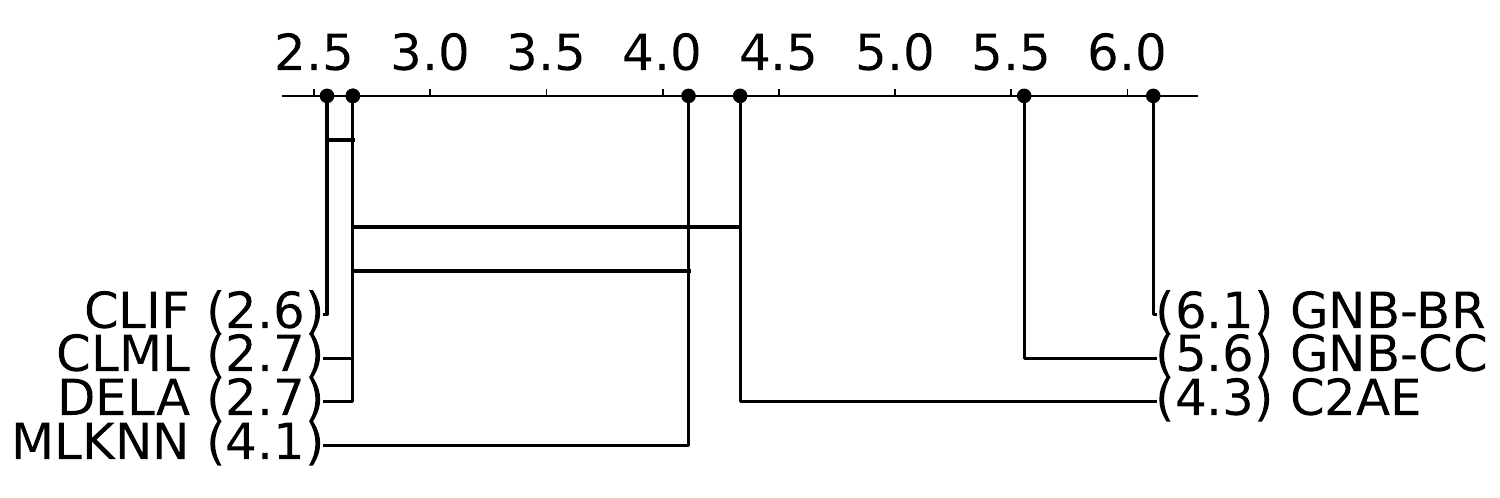}
            \centerline{(c) $\mathcal{L}_3(f)$}
        \end{minipage}
        \begin{minipage}{0.32\textwidth}
            \includegraphics[width=0.9\textwidth]{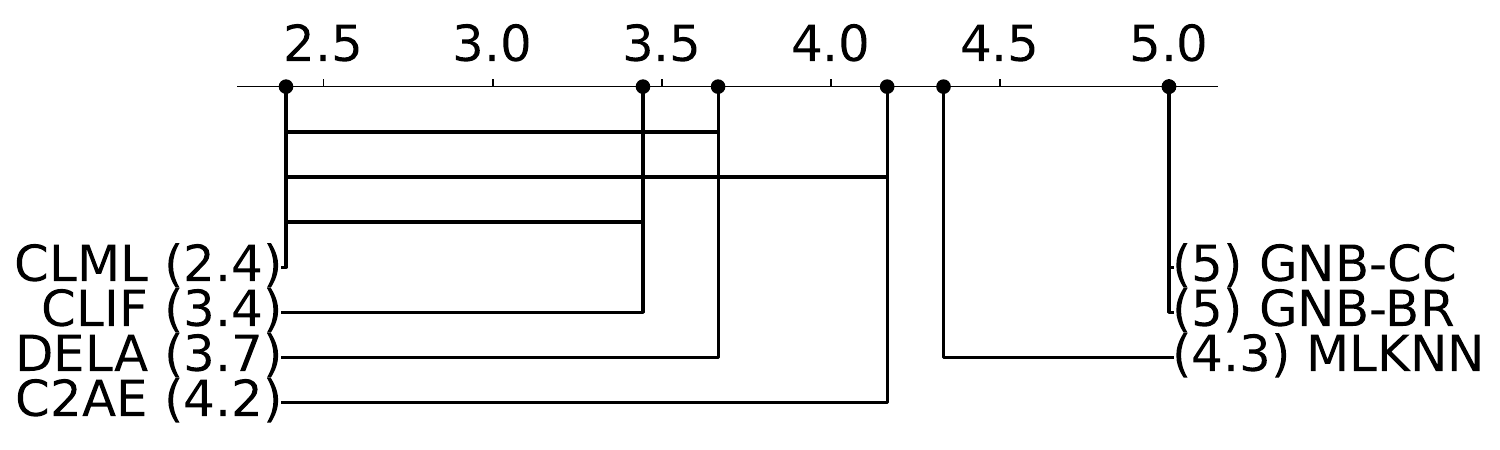}
            \centerline{(d) $\lambda(P(f))$}
        \end{minipage}
        \hfill
        \begin{minipage}{0.32\textwidth}
            \includegraphics[width=0.9\textwidth]{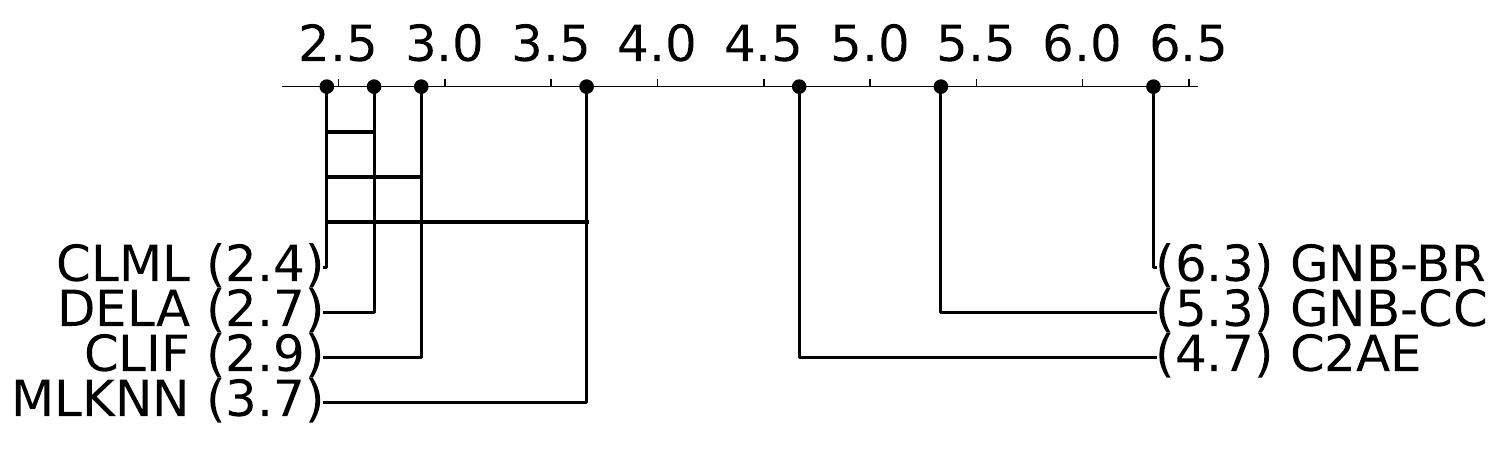}
            \centerline{(e) $\mu_g(\boldsymbol{\mathcal{L}}(f))$}
        \end{minipage}
        \hfill
        \caption{Bonferroni-Dunn test critical difference plots. A crossbar is drawn between CLML and any method if their difference in average ranking is less than the critical difference ($CD=2.686$ with $K=7$ methods and $T=9$ datasets obtained from a studentised range table).}
        \label{CDBonferroni}
    \end{figure}
     
    \subsection{Statistical analysis}
    
    We compare CLML against DELA, CLIF, C2AE, ML$k$NN, GNB-CC, and GNB-BR using several multi-label loss functions on nine different, commonly used open-access datasets. The experimental protocol and dataset summary are described in detail in Section \ref{experiments}. 
    
    \subsubsection{Aggregated performances}
    The geometric mean of the loss vector $\boldsymbol{\mathcal{L}}(f(\textbf{X}),\textbf{Y})$,
    $\mu_g(\boldsymbol{\mathcal{L}}(f(\textbf{X}),\textbf{Y})) = (\prod_{i=1}^{3}\mathcal{L}_i(f(\textbf{X}),\textbf{Y}))^{\frac{1}{3}}$, determines the aggregate performance of the loss functions. Owing to its multiplicative nature, the geometric mean can be sensitive to very low values among the loss functions, which can result in a lower aggregate value than the arithmetic mean, which accords each value with equal importance. This sensitivity grants the geometric mean with finer granularity, which is beneficial when: (1) the range and magnitude of values among the loss functions are comparable in scale, and (2) distinguishing solutions that exhibit exemplary performance on a subset of the loss functions is a priority. Table \ref{geometric_mean_table} includes the medians of the geometric means over all datasets for each of the comparative methods. CLML achieves the lowest (median) geometric mean.
    \begin{figure}[!t]
        \centering
    \begin{minipage}[t]{0.47\linewidth}
    \centering
    \captionof{table}{Medians of the geometric means for all methods across all datasets.}
    \label{geometric_mean_table}
    \vspace{3.5mm}
    \scalebox{0.8}{
    \begin{tabular}{lll}
    $f$ &Median $\mu_g(\boldsymbol{\mathcal{L}}(f))$\\ \hline
    GNB-BR& 0.481\\
    GNB-CC& 0.415\\
    ML$k$NN& 0.249\\
    C2AE& 0.394\\
    CLIF& 0.269\\
    DELA& 0.254\\
    CLML& \textbf{0.240}\\
    \end{tabular}}
    \end{minipage}
    \hspace{5pt}
    \begin{minipage}[t]{0.47\linewidth}
    \captionof{table}{Friedman statistic ($F$) and critical value ($C$) of each measure for all comparison methods across all datasets.}
    \label{friedman_table}
    \centering
    \scalebox{0.8}{
        \begin{tabular}{lrr}
    Metric &  $F$&  $C$ \\
    \midrule
  $\lambda(P(f))$&            2.88 &       \multirow{5}{*}{15.51} \\
  $\mu_g(\boldsymbol{\mathcal{L}}(f))$&           21.37 &\\
    $\mathcal{L}_1$&           35.62 &\\
  $\mathcal{L}_2$ &           25.64&\\
  $\mathcal{L}_3$&           17.75 &\\
    \end{tabular}}
    \end{minipage}
    \end{figure}

    \begin{figure*}[t]
    \centering
    \begin{minipage}{0.48\textwidth}
        \includegraphics[width=0.85\textwidth]{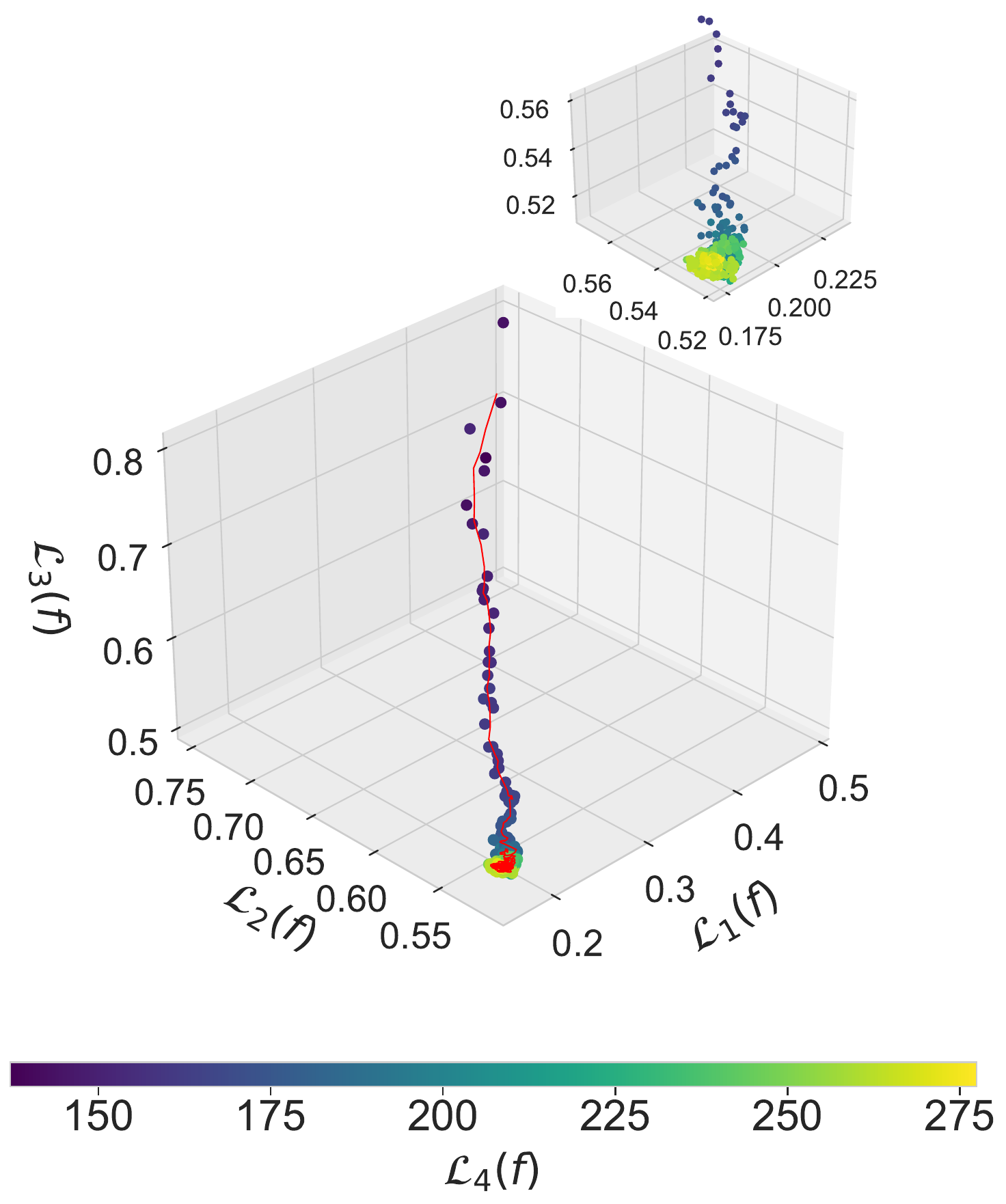}
        \centerline{(a) CAL500}
    \end{minipage}
    \begin{minipage}{0.48\textwidth}
        \includegraphics[width=0.85\textwidth]{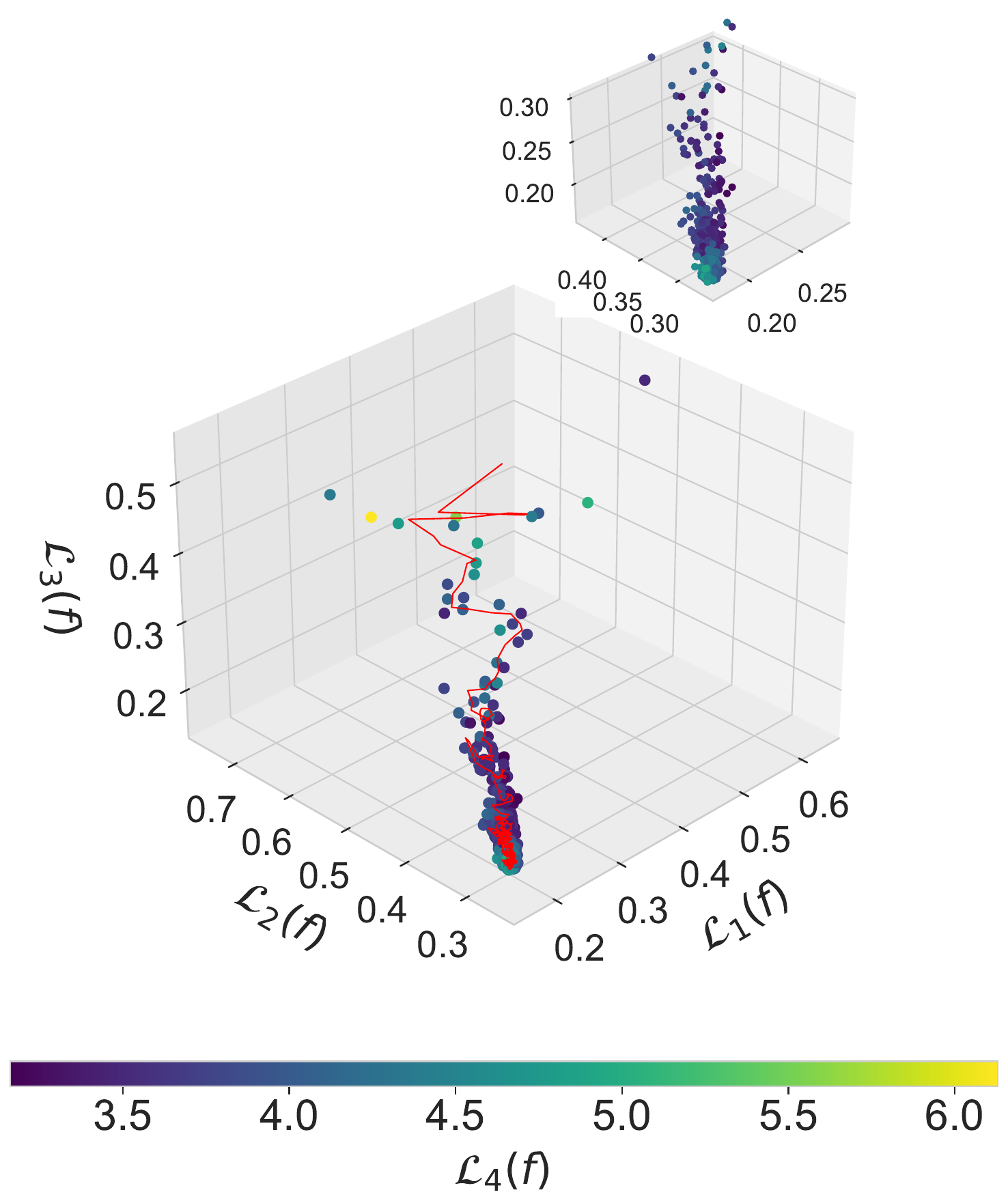}
        \centerline{(b) emotions}
    \end{minipage}
    \caption{The training curves of CLML plotted against $\mathcal{L}_1(f(\textbf{X}),\textbf{Y})$, $\mathcal{L}_2(f(\textbf{X}),\textbf{Y})$, and $\mathcal{L}_3(f(\textbf{X}),\textbf{Y})$. The colour represents the averaged binary cross-entropy loss $\mathcal{L}_4(f(\textbf{X}),\textbf{Y})$, which is tracked independently during the optimisation process. The red line shows the moving average trajectory of CLML. A zoom-in plot is presented at the top right of each subplot to highlight the area of convergence.}
    \label{training curves}
    \end{figure*}

    \subsubsection{Non-parametric tests}
    We employ the widely-used non-parametric Friedman test \citet{demvsar2006statistical} to measure any statistically significant performance differences between the methods concerning the Lebesgue contribution, the geometric mean, and each loss function. Table \ref{friedman_table} exhibits the significant differences between the geometric means and each loss function across all comparative methods and datasets. Interestingly, Table \ref{friedman_table} shows that the Friedman test of the Lebesgue contributions did not reflect these differences. Note that these observations do not indicate uniformity in the performance of all methods. For example, while different methods may contribute solutions of different qualities, said solutions may intersect in terms of their Lebesgue contributions. This is evident in the supplementary Tables \ref{tab:hv_contributions1} and \ref{tab:hv_contributions2}, where all dominated methods have a Lebesgue contribution of zero. 
    
    A further Bonferroni-Dunn test is carried out along with a critical difference analysis at $\alpha=0.05$ \citep{Dunn1961MultipleCA,zhang2004statisticala}. A pairwise comparison between the average ranks of each algorithm is performed, with CLML set as a control algorithm. The critical difference plots in Figure \ref{CDBonferroni} illustrate the rankings of each algorithm. The rankings are presented in ascending order, where the best method is on the leftmost side of the plot.
 
   In summary, CLML achieves the best aggregate rank of $2.50$ (aggregated among all measures and datasets presented in Figure \ref{CDBonferroni}), compared to $2.90$ of DELA ($+13.79\%$), $3.02$ of CLIF ($+17.22\%$), $3.9$ of ML$k$NN ($+35.89\%$), $4.54$ of C2AE ($+44.93\%$), $5.18$ of GNB-CC ($+51.74\%$), and $6.0$ of GNB-BR ($+58.33\%$). The only ranking that CLML is not the best at is concerning $\mathcal{L}_1$ and $\mathcal{L}_3$ (the former achieving a tie with CLIF, and the latter losing to CLIF by $0.1$). Furthermore, the Bonferroni-Dunn critical difference test is not powerful enough to detect a significant difference in rankings, even though CLML achieves a better rank in most cases. This largely stems from DELA and CLIF being the current state-of-the-art, remaining competitive with CLML in most cases, thus achieving a comparable ranking that is often within the critical difference. The significance of CLML's promising performance is accentuated by its simple feedforward representation that does not include embedding or perturbation techniques seen in DELA, CLIF, or C2AE. This highlights the influence and importance of a consistent loss function that properly directs optimisation behaviour and quality of multi-label learners. To further support this claim, the next section provides further analysis to examine the relationship between the average binary cross entropy and the loss functions that are optimised by CLML.
    
    \section{Effectiveness of the Lebesgue measure over surrogate loss}

    This section analyses the relationship between the loss functions: $\mathcal{L}_1$, $\mathcal{L}_2$, and $\mathcal{L}_3$, and the averaged binary cross entropy loss representing a standard surrogate loss function \citep{wu2017c2ae,bai2021mpvae,hang2022collaborative}, denoted $\mathcal{L}_4$.  The following key observations can be made.
    
    \subsection{Moving average trajectory} Figure \ref{training curves} plots the training trajectory of CLML concerning the four loss functions during training on the CAL500 and emotions datasets. The remaining training curves are available in Figures \ref{training_curves_supplement_1} and \ref{training_curves_supplement2} in Section \ref{surrogate_extended}. The red line traces the moving average trajectory of CLML on the approximate loss landscape defined by the three loss functions $\mathcal{L}_1$, $\mathcal{L}_2$, and $\mathcal{L}_3$, while the point colour denotes the $\mathcal{L}_4$ loss. The moving average trajectory indicates a distinct and consistent decline of all the desired loss functions $\mathcal{L}_1$, $\mathcal{L}_2$, and $\mathcal{L}_3$. Recall that CLML specifically optimises the Lebesgue \textit{contribution} $\lambda(P(f))$, \textit{i.e.}, the contribution toward the improvement of the Lebesgue measure $\lambda(H(F,R))$. In this case, by empirical observation, maximising the Lebesgue contribution directly corresponds to the minimisation of the desired loss functions. Furthermore, despite a degree of stochasticity, the overarching trend indicates that all three loss functions can be optimised in tandem despite their conflicting nature (see \citet{wu2020multi} for an analysis of the conflicting nature of multi-label loss functions). This is primarily due to the inherent nature of the Lebesgue measure, which naturally considers the trade-offs between the different multi-label loss functions. Hence, the optimisation behaviour of CLML naturally follows a path that understands and accounts for the best trade-offs between these loss functions. Furthermore, Figure \ref{training curves} draws attention to the increase in $\mathcal{L}_4$ near the point of convergence on both CAL500 and emotions. On both datasets, the minimisation of $\mathcal{L}_1$, $\mathcal{L}_2$, and $\mathcal{L}_3$ does not directly correspond to the minimisation of $\mathcal{L}_4$, which suggests a discrepancy between the surrogate and desired loss functions. 

    \begin{figure}[!h]
    \centering
    \begin{minipage}{0.48\textwidth}
    \centering
    \includegraphics[width=0.8\textwidth,  trim = 0 0 0 2cm, clip]{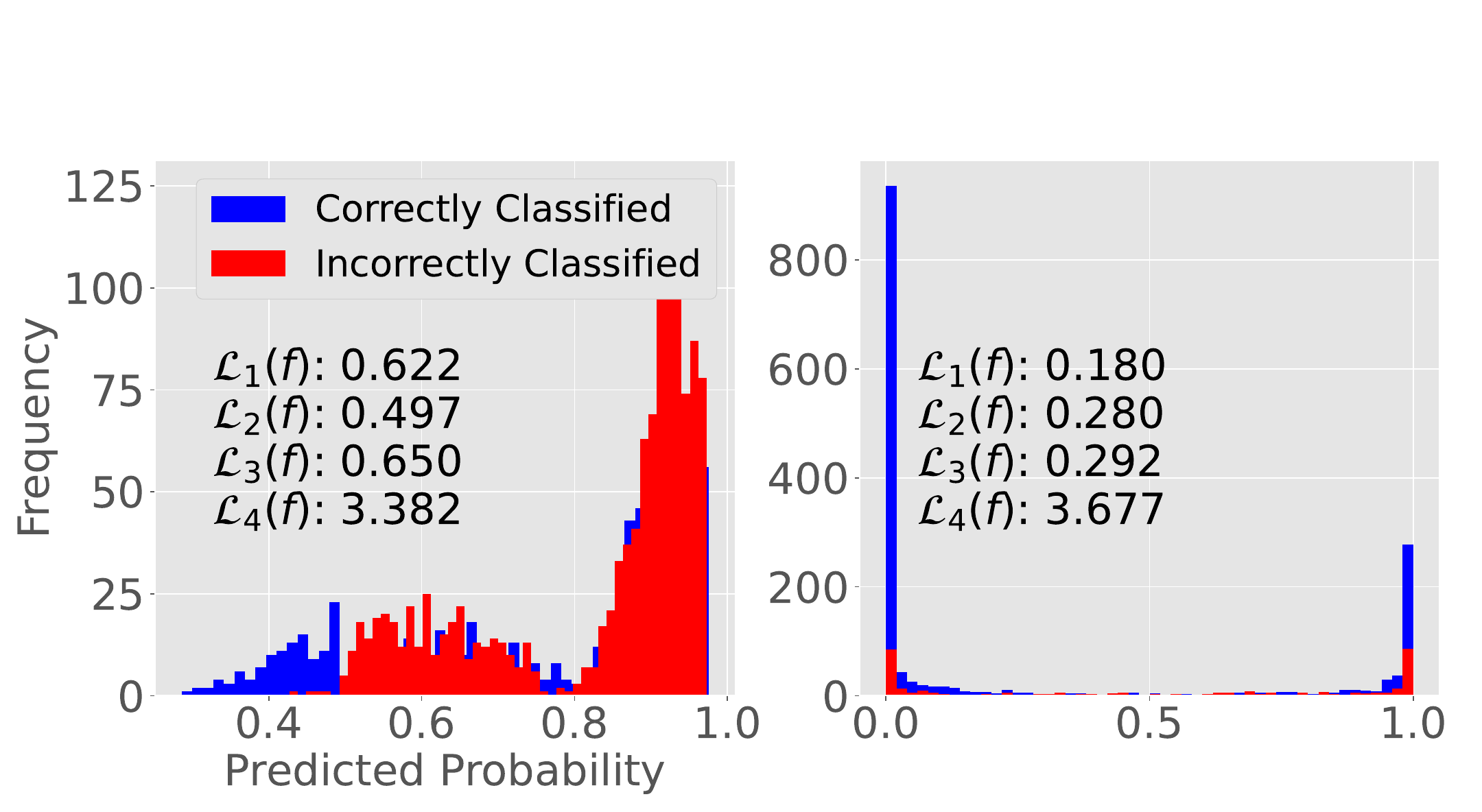}
        \centerline{(a) Label distributions (before and after training).}
    \end{minipage}
    \begin{minipage}{0.48\textwidth}
    \centering
    \includegraphics[width=0.8\textwidth, trim = 0 0 0 1cm, clip]{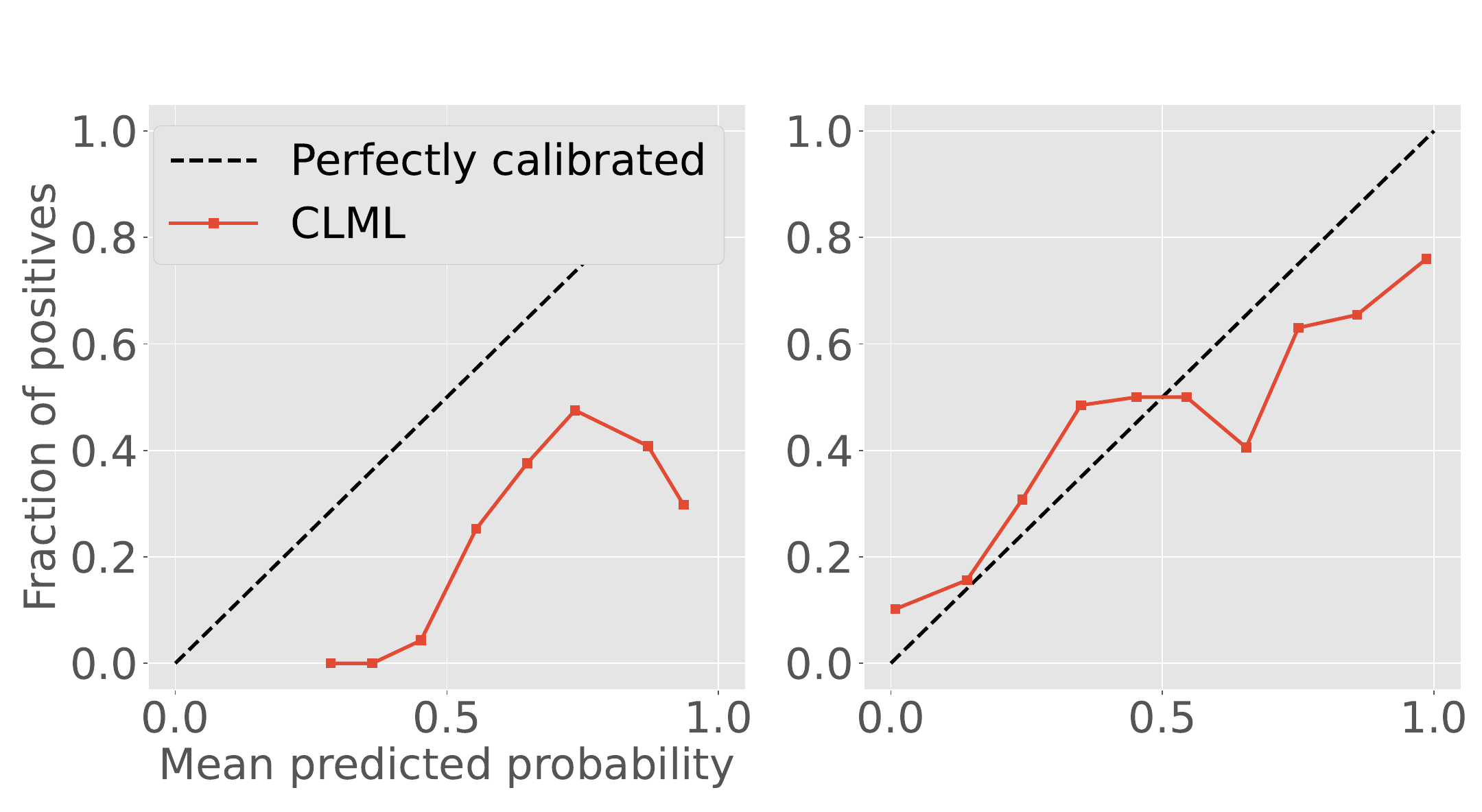}
        \centerline{(b) Calibration curves (before and after training).}
    \end{minipage}
    \caption{Results of CLML's incumbent solution shown on the emotions dataset before (after 1 epoch) and after training (all epoch). The number of bins for the distributions and the calibration plots are $B_D=50$ and $B_C=10$, respectively.}
    \label{confidence}
    \vspace{-3mm}
    \end{figure}

    \subsection{Label confidence vs. label accuracy} 
    Figures \ref{confidence}(a) and (b) plot the distributions of CLML's incumbent label probabilities and the calibration curve of CLML's incumbent solution respectively, after one epoch (left) and after training (right). Both correct and incorrect label probability distributions are uni-modal and share a similar shape, with noticeable variation in confidence ranging from $0.5$ to $0.8$. This suggests that CLML is initially less confident in its prediction of the absence of labels than its prediction of the presence of labels (irrespective of correctness). After training, the distributions are heavily skewed and bi-modal, which suggests that CLML is very confident in predicting labels. This bi-modal shape applies to both the correct and incorrect label distributions. Notably, the value of $\mathcal{L}_4$ after one epoch increases from $3.382$ to $3.677$ after training, an undesired effect. This increase can be attributed to CLML's increased degree of confidence in the incorrectly classified labels. However, $\mathcal{L}_1$, $\mathcal{L}_2$, and $\mathcal{L}_3$ are desirably lower after training, which is further supported by the improvement in the calibration curve of CLML after training in Figure\ref{confidence}(b). These observations highlight the discrepancy between confidence and accuracy, which underscores the importance of directly handling accuracy (without surrogacy) in multi-label learning.
	
	\section{Conclusions}

  Deep learning techniques have advanced the field of multi-label learning, although inconsistencies remain between surrogate loss functions and multi-label functions and have seldom been addressed. This motivates the design of the Consistent Lebesgue Measure-based Multi-label Learner (CLML) to optimise multiple non-convex and discontinuous loss functions simultaneously using a novel Lebesgue measure-based learning objective. By analysis, we proved that CLML is theoretically consistent with the underlying loss functions that are optimised. Furthermore, empirical evidence supports our theory by demonstrating a $13.79\%$ to $58.33\%$ improvement in the critical difference rankings. These results are especially significant due to the \textit{simplicity} of CLML, which achieves state-of-the-art results without the need to explicitly consider label interactions via label graphs, latent semantic embeddings, or perturbation-based conditioning. Lastly, our analysis shows that CLML can naturally account for the best trade-offs between multiple multi-label loss functions that are known to exhibit conflicting behaviour. CLML's state-of-the-art performance further highlights the importance of optimisation consistency over model complexity. Thus, the findings of this paper analytically emphasises the overall significance of consistency and goal alignment in multi-label learning. 
  
\newpage

\bibliography{main}
\bibliographystyle{icml2024}

\newpage
\appendix
\onecolumn
\section{Optimisation 
    Process}\label{optimisationprocess}
    The overall optimisation procedure using covariance matrix adaptation and the Lebesgue measure is described clearly in the following sections.

    \subsection{Covariance matrix adaptation}\label{covariancematrixadaptation}

    Co-variance Matrix Adaptation Evolutionary Strategy (CMA-ES) \cite{hansen2001completely} is a gradient-free numerical optimisation technique well-suited for non-convex and non-differentiable optimisation problems. Suppose the representation of a learner $f$ can be denoted as a vector consisting of its learnable parameters $\theta^f$. CMA-ES works by sampling $\lambda$ solutions from a multi-variate normal distribution as follows:
	\begin{equation}
		\theta^f_i\sim\textbf{m}+\sigma\mathcal{N}_i(0,\textbf{C})\quad \forall i, \quad 1\leq i \leq \lambda
	\end{equation}
	where $\theta^f_i$ is the learnable parameters of the $i^{th}$ learner, $\textbf{m}$ is a mean-vector which represents the expected density of parameters of $f$, $\sigma$ the step-size, and \textbf{C} the covariance matrix. CMA-ES therefore iteratively updates $\textbf{m}$ and $\textbf{C}$ via the following:
	\begin{align}
		\textbf{m}^{t+1} = \textbf{m}^t+ \sigma \sum_{i=1}^{\mu}w_i \theta^{f^{top}}_i&\\
		\textbf{C}^{t+1}=(1-c_{cov})\textbf{C}^t+c_{cov}\sum_{i=1}^{\mu}w_i \theta^{f^{top}}_i(\sum_{i=1}^{\mu}w_i \theta^{f^{top}}_i)^T&\\
	\end{align}
	where $c_{cov}$ is the learning rate,  $\sum_{i=1}^{\mu}w_i\theta^{f^{top}}_i$ is the weighted sum of the $\mu$-highest ranked solutions at time $t$, where the weights $w_1>w_2>\cdots>w_\mu>0$ and $\sum_{i=1}^{\mu}w_i=1$. It is also deemed that solutions $\theta^{f^{top}}_i\sim \textbf{m}+\sigma\mathcal{N}_i(0,\textbf{C})$ are ranked such that $\theta^{f^{top}}_1\prec\cdots\prec\theta^{f^{top}}_\mu$ and that the $\mu$ ranked solutions are a subset of the total number of sampled solutions, i.e. $\mu < \lambda$. This method is referred to as rank-one update.

    \begin{algorithm*}[!h]
    \caption{Consistent Lebesgue Measure-based Multi-label Learner}
    \label{optimisationalgorithm}
    \begin{algorithmic}
    \STATE {\bfseries Input:} {Initial covariance matrix $\textbf{C}^0$, and density vector $\textbf{m}^0$, and maximum number of epoch $T$};
    \STATE Initialise $R^0$ to unit vector $\{1\}^3$;
    \STATE Initialise $F^0 = \{\emptyset\}$;
    \STATE Set $t=0$;
    \STATE Set $f^0 = \textbf{m}^0$;
    \WHILE{$t < T$}{
        \STATE Generate $f^i\sim \textbf{m}^t + \sigma\mathcal{N}_i(0,\textbf{C}^t)$, $1\leq i \leq \lambda$;
        \STATE Set $F^{t+1} = \bigcup_{i=1}^\lambda \{f^i\}$;
        \FOR{$f^{i} \in F^{t+1}$}{
            \STATE Calculate the training ($tra$) and validation ($val$) loss values for each loss function: $\mathcal{L}_1(f^{i})$, $\mathcal{L}_2(f^{i})$, and $\mathcal{L}_3(f^{i})$;
            \STATE Estimate $\lambda(P(f^i))$ using the Monte Carlo method over loss functions $\mathcal{L}_1^{tra}(f^i)$, $\mathcal{L}_2^{tra}(f^i)$, and $\mathcal{L}_3^{tra}(f^i)$, and prescribe it as the fitness for $f^i$;
            \STATE Archive the loss values $\mathcal{L}_1^{val}(f^i)$, $\mathcal{L}_2^{val}(f^i)$, and $\mathcal{L}_3^{val}(f^i)$, and corresponding function $f^i$;
        }
        \ENDFOR
        \STATE Update density $\textbf{m}^{t+1}$ to the average of the newly generated solutions $\forall{f^{i}}\in F^{t+1}$;
        Update $\textbf{C}^{t+1}$ via rank-one method using the prescribed $\lambda(P(f^i))$ as fitness values $\forall{f^{i}}\in F^{t+1}$;
        \STATE Update $R^{t+1}$ by calculating the mutually non-dominated solutions in $R^{t}\cup F^{t+1}$;
        \STATE Set $f^{t+1}$ to the best solution in $F^{t+1}$ according to its prescribed fitness value;
    }
    \ENDWHILE
    \STATE {\bfseries Return:} {Incumbent solutions for each loss function from archive: $\mathcal{L}_1^{val}(f^i)$, $\mathcal{L}_2^{val}(f^j)$, and $\mathcal{L}_3^{val}(f^k)$, and the final incumbent solution $f^T$};
    \end{algorithmic}
    \end{algorithm*}

    \begin{table*}[t]
	\centering
	\caption{Summary of datasets used in this paper. $D$, $N$, and $K$ correspond to the number of features, instances, and labels, respectively.}
	\scalebox{0.9}{
		\begin{tabular}{cccccccc}	Dataset&$N$&$D$&$K$&$DK$&$K^\mu$&$DK/K^\mu$&$DK^\mu$\\
			\hline
			flags&194&19&7&133&3.392&39.21&64.45\\
			CAL500&502&68&174&$251,000$&26.044&$9,637.54$&$1,770.99$\\
			emotions&593&72&6&432&1.869&231.14&$134.57$\\
            genbase&662&1186&27&$32,022$&1.252&$25,576.68$&$1,484.87$\\
            enron&1702&1001&53&$53,053$&3.378&$15,705.45$&$3,381.38$\\
			yeast&2417&103&14&1442&4.237&340.335&$436.411$\\
            tmc2007-500&28,596&500&22&11,000&2.158&5,097.31&1,079\\
            mediamill&43,907&120&101&12,120&4.376&2,769.65&525.12\\
            IMDB-F&120,900&1001&28&28,028&2.000&14,014&2,002\\
	\end{tabular}}
	\label{tab_dataset}
    \end{table*}

    \subsection{Lebesgue measure estimation using Monte Carlo sampling}\label{lebesguemeasureestimation}

    The Lebesgue measure $\lambda(H(F,R))$ described in Eq. \ref{lebesguemeasure} integrates the area covered by a set of loss function vectors in a multi-dimensional objective space. This measure is comprised of three sets: $F$, $R$, and $Z$. $F$ denotes the set of representations of functions (which map the input data to a vector of loss function values). $R$ denotes the set of mutually non-dominating loss vectors. Initially, $R$ is set to the unit loss vector $\{1\}^3$, which denotes the worst possible performance for Hamming-loss, one minus the Micro-$F_1$, and one minus the label ranking average precision. Last, $Z$ denotes the set containing all possible loss function vectors in the applicable multi-dimensional loss objective space.

    The Lebesgue contribution $\lambda(P(f))$ of a function $f$ measures the new marginal improvement of a function's loss vector over a set of previous loss vectors. In this paper, we use the Lebesgue contribution to quantify candidate functions found by CLML during the optimisation process. However, to efficiently calculate the Lebesgue contribution (especially when the set of functions $F$ and $R$ are sparsely populated during the early stages of the optimisation), we estimate the Lebesgue measure using Monte Carlo sampling. First, a sampling space $S \subseteq Z$ is defined that entirely contains $P(f)$, \textit{i.e.,} $P(f) \subseteq S \subseteq Z$. The sampling space can be problem-specific, however, in this paper, it is defined to contain all possible loss vectors between $\{0\}^3$ and $\{1\}^3$. Following, $g$ samples are drawn from $s_i\in S$ randomly and with uniform probability. Given $\{s_1,\cdots,s_g\}$, the Lebesgue contribution is estimated via $\hat{\lambda}(P(f))$ via the following:

    \begin{equation}
        \hat{\lambda}(P(f)) = \lambda(S(f)) = \frac{|\{s_i| s_i \in P(f)\}|}{g}\label{montecarlosampling}
    \end{equation}
    where $|\{s_i| s_i \in P(f)\}|$ is denoted as the number of randomly sampled solutions that exist in $P(f)$, also known as \textit{hits}. The probability $p$ of a sample being \textit{hit} is i.i.d. Bernoulli distributed, therefore, $\hat{\lambda}(P(f))$ converges to $\lambda(P(f))$ with $\frac{1}{\sqrt{pg}}$ \cite{laplace1814theorie}. 
    
    \subsection{Optimisation of the Lebesgue measure using covariance matrix adaptation}\label{optimisationprocedure}

    The optimisation process is described in Algorithm \ref{optimisationalgorithm}. Starting with an initial covariance matrix and density vector, CLML optimises the Lebesgue contribution of newly generated candidate functions obtained by perturbing a density vector and covariance matrix. Each function is evaluated using the Hamming-loss ($\mathcal{L}_1$), one minus the label ranking average precision ($\mathcal{L}_2$), one minus the micro-$F_1$ ($\mathcal{L}_3$). The density vector is updated toward the density of the current solutions, and the covariance matrix is updated using a rank-one method. CLML maintains an archive of solutions based on the validation fitness values that are derived from the prescribed loss functions. Ultimately, CLML returns the incumbent solutions (in terms of validation loss) for each of the loss functions from the archive and the final incumbent solution.
    
    \section{Experimental Protocol}\label{experiments}
    
    We conduct the experiments on nine widely-used multi-label datasets, shown in Table \ref{tab_dataset}. $K^\mu$ (the cardinality) of an instance measures the average number of associated class labels; $DK/K^\mu$, the theoretical maximum complexity of an instance, (\textit{i.e.}, the instance-level average dispersion of feature to label interactions); and $DK^\mu$, the average feature to label interactions of an instance. There are two important cases to consider. First, if dispersion is less than the average interaction, \textit{i.e.}, $DK/K^\mu < DK^\mu$, then the dataset contains high concentrations of rich instance-level feature-to-label interactions that are not apparent when examining the dataset as a whole. This can indicate that there are clusters of instances that share similar feature-to-label interactions, and therefore a less diverse dispersion of the possible feature-to-label interactions. Second, if dispersion is higher than the average interaction, \textit{i.e.}, $DK/K^\mu > DK^\mu$, the dataset as a whole has a greater expression of feature-to-label interactions than a given individual instance. Put differently, the dataset's instances each contain a subset of the total dataset interactions. The latter case is particularly challenging as it indicates a high number and variability of potential patterns and interactions between features and labels. The first case occurs in both Flags and Yeast and the second case occurs in the remaining datasets. 
    
    For each dataset, 30\% are partitioned to the test set \citep{sechidis2011stratification}. The remaining 70\% is further split such that 20\% is used as a validation set, and the remaining is used for training. We apply normalisation to all numerical features before training.
    \section{Ablation Study}\label{ablation}
    We trial the embedding dimension $C$ at eight separate values. It is important to note that the latent space does not need to express spatial relationships of tabulated data, hence the embedding dimension can be quite small (in contrast to computer vision in works such as \citet{gong2022contrastive}). In addition to $\mathcal{L}_1, \mathcal{L}_2$, and $\mathcal{L}_3$, we set $\mathcal{L}_4$ as the averaged binary cross-entropy loss and track its progress during optimisation. For each experiment, we set $\mathcal{O}=750$ (the maximum number of epochs). Here, we present the results for each of the embedding dimensions. 
    
    Figures \ref{lebesgue_plot1} and \ref{lebesgue_plot2} plot the Lebesgue measure of the sequence of functions obtained by CLML as $n\to \mathcal{O}$ (\textit{i.e.}, the archive of non-dominated solutions obtained by CLML in $\mathcal{O}$ epoch). Smaller embedding dimensions (\textit{i.e.}, $C\leq 80$) result in the best validation scores of $\lambda(H(F,R))$. To exemplify this, we tabulate the incumbent solution of the function sequence in terms of its $\mathcal{L}_1$, $\mathcal{L}_2$, and $\mathcal{L}_3$ scores on the validation set, against $\mathcal{L}_4$ according to each (non-normalised) value of $C$ in Table \ref{discrepancy_table1} and \ref{discrepancy_table2}. When $C=20$, we observe the lowest $\mathcal{L}_1$, $\mathcal{L}_2$, and $\mathcal{L}_3$ validation loss scores on the emotions dataset, and the lowest $\lambda(H(F,R))$ score on the CAL500 dataset. This observation indicates that CLML converges toward a better approximation of the Bayes predictors of $\mathcal{L}_1$, $\mathcal{L}_2$, and $\mathcal{L}_3$ on the emotions dataset, while on CAL500, CLML finds functions with more desirable trade-offs between the variant loss functions, hence the higher Lebesgue measure. These values motivate our recommendation to set the number of embedding dimensions to $C=20$.

    \begin{minipage}{0.48\textwidth}
        \includegraphics[width=0.9\textwidth]{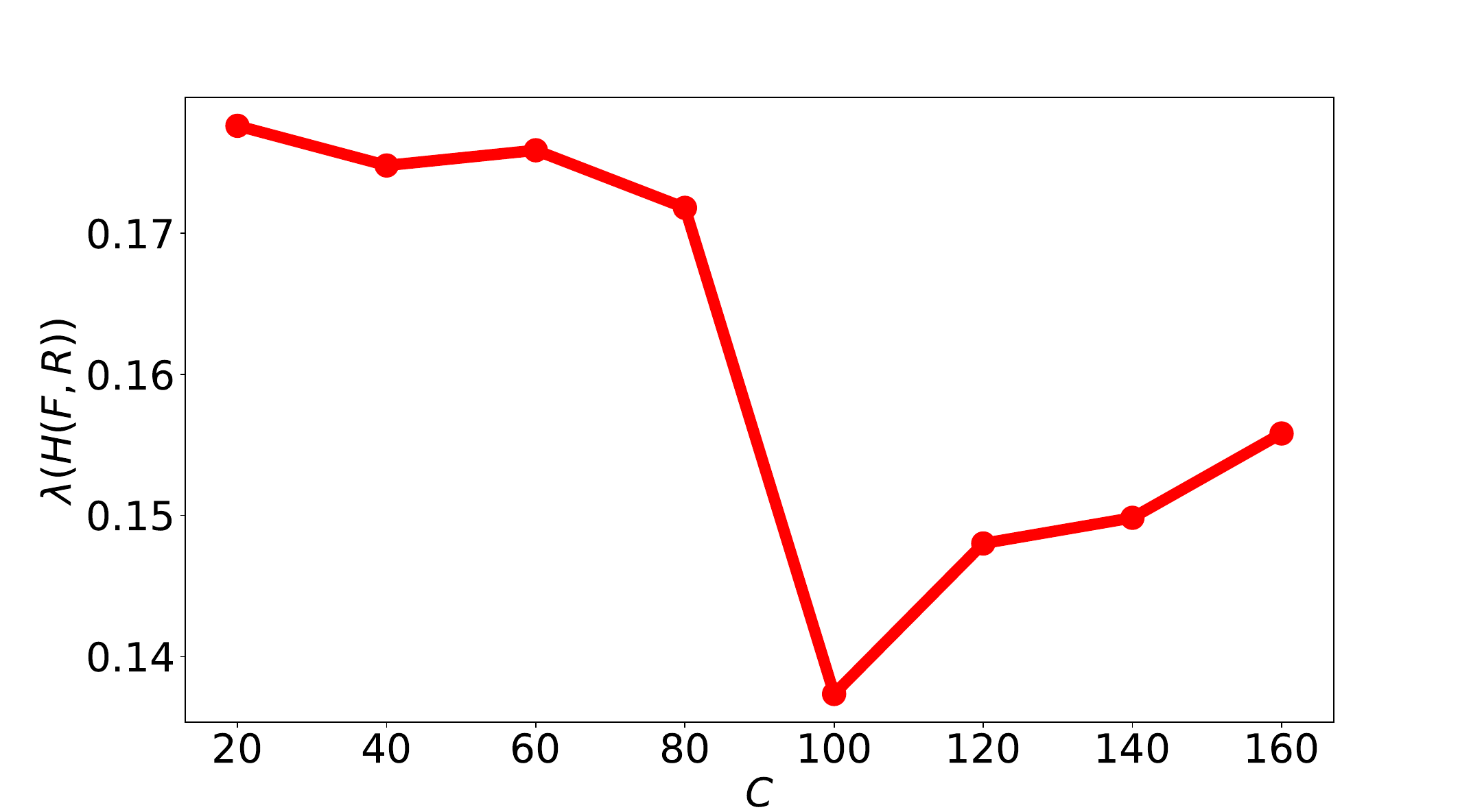}
        \captionof{figure}{The best Lebesgue measure obtained on CAL500 at each embedding dimension of the sequence of function sets $\underset{n\to \mathcal{O}}{\text{lim}}\lambda(H(F^{(n)},R))$.}
        \label{lebesgue_plot1}
    \end{minipage}
    \hfill
    \begin{minipage}{0.48\textwidth}
        \centering
        \captionof{table}{Best validation loss values of the incumbent solution for each embedding dimension on CAL500.}
        \label{discrepancy_table1}
        \begin{tabular}{ccccc}
        $C$ & $\mathcal{L}_1$ &  $\mathcal{L}_2$ &   $\mathcal{L}_3$ &  $\mathcal{L}_4$ \\
        \hline
          20.0 &  0.169 &  0.523 &  \textbf{0.509} &    \textbf{138.068} \\
          40.0 &  0.171 &  \textbf{0.522} &  0.518 &    143.809 \\
          60.0 &  0.169 &  0.523 &  0.520 &    144.201 \\
          80.0 &  \textbf{0.161} &  0.527 &  0.525 &    149.604 \\
         100.0 &  0.196 &  0.529 &  0.534 &    157.877 \\
         120.0 &  0.171 &  0.529 &  0.539 &    155.555 \\
         140.0 &  0.167 &  0.528 &  0.534 &    151.250 \\
         160.0 &  0.168 &  0.526 &  0.533 &    153.533 \\
        \end{tabular}
    \end{minipage}

     \begin{center}
    \begin{minipage}{0.48\textwidth}
        \includegraphics[width=0.9\textwidth]{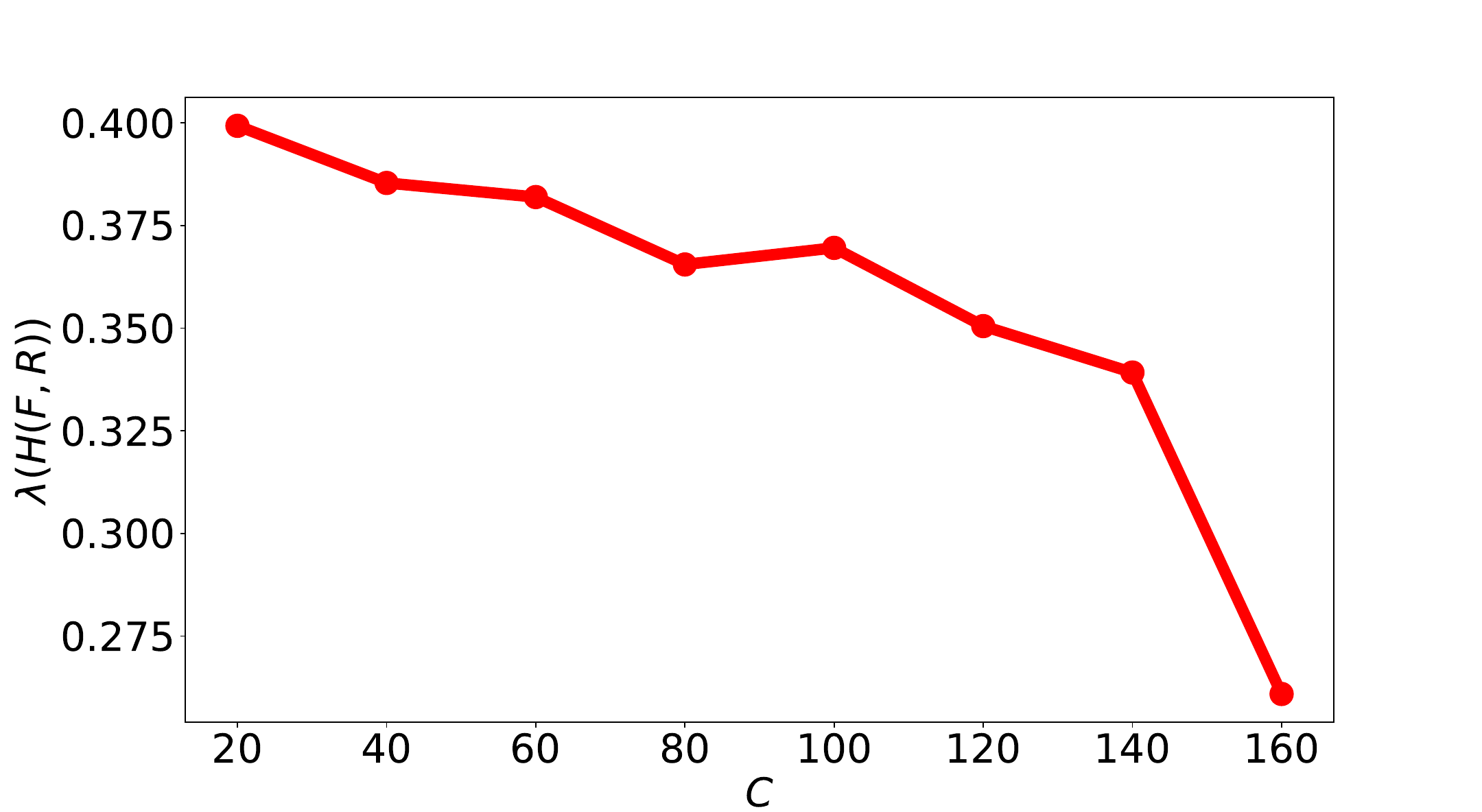}
        \captionof{figure}{The best Lebesgue measure obtained on emotions at each embedding dimension of the sequence of function sets $\underset{n\to \mathcal{O}}{\text{lim}}\lambda(H(F^{(n)},R))$.}
        \label{lebesgue_plot2}
    \end{minipage}
    \hfill
    \begin{minipage}{0.48\textwidth}
        \centering
        \captionof{table}{Best validation loss values of the incumbent solution for each embedding dimension on emotions.}
        \label{discrepancy_table2}
        \begin{tabular}{ccccc}
        $C$ & $\mathcal{L}_1$ &  $\mathcal{L}_2$ &   $\mathcal{L}_3$ &  $\mathcal{L}_4$ \\
        \hline
          20.0 &  \textbf{0.187} &  \textbf{0.283} &  \textbf{0.178} &      3.399 \\
          40.0 &  0.199 &  0.307 &  0.197 &      \textbf{3.246} \\
          60.0 &  0.192 &  0.299 &  0.196 &      3.255 \\
          80.0 &  0.196 &  0.306 &  0.196 &      3.910 \\
         100.0 &  0.199 &  0.302 &  0.199 &      3.742 \\
         120.0 &  0.210 &  0.333 &  0.212 &      3.557 \\
         140.0 &  0.202 &  0.313 &  0.193 &      4.380 \\
         160.0 &  0.250 &  0.398 &  0.252 &      5.758 \\
        \end{tabular}
    \end{minipage}
    \end{center}

    \begin{table*}[h!]
\centering
\caption{Lebesgue measure contributions of each $f$ on datasets: CAL500 to genbase}
\begin{scalebox}{0.75}{
\begin{tabular}{l|c|c|c|c|c}
Dataset &                      Method &               Solution & HV Contribution & Normalized Contribution & Geometric Mean \\
\hline
  CAL500 &  GNB-BR &  (0.547, 0.713, 0.647) &               0 &                       0 &       0.631976 \\
  CAL500 &  GNB-CC &  (0.255, 0.633, 0.741) &               0 &                       0 &       0.492774 \\
  CAL500 &                       MLKNN &  (0.150, 0.637, 0.554) &        0.000351 &                0.020488 &       0.375585 \\
  CAL500 &                        C2AE &  (0.258, 0.536, 0.534) &               0 &                       0 &       0.419409 \\
  CAL500 &                        CLIF &  (0.137, 0.681, 0.502) &        0.007068 &                0.412185 &       0.360752 \\
  CAL500 &                        DELA &  (0.171, 0.633, 0.596) &               0 &                       0 &       0.400750 \\
  CAL500 &                    CLML &  (0.168, 0.526, 0.520) &        0.009729 &                0.567326 &       0.358231 \\
   yeast &  GNB-BR &  (0.319, 0.472, 0.351) &               0 &                       0 &       0.375014 \\
   yeast &  GNB-CC &  (0.319, 0.481, 0.415) &               0 &                       0 &       0.399048 \\
   yeast &                       MLKNN &  (0.213, 0.375, 0.298) &               0 &                       0 &       0.287821 \\
   yeast &                        C2AE &  (0.221, 0.358, 0.272) &        0.003081 &                0.425447 &       0.278355 \\
   yeast &                        CLIF &  (0.227, 0.391, 0.275) &               0 &                       0 &       0.290108 \\
   yeast &                        DELA &  (0.226, 0.391, 0.276) &               0 &                       0 &       0.289957 \\
   yeast &                    CLML &  (0.211, 0.364, 0.266) &        0.004160 &                0.574553 &       0.273480 \\
   enron &  GNB-BR &  (0.198, 0.725, 0.776) &               0 &                       0 &       0.481206 \\
   enron &  GNB-CC &  (0.125, 0.638, 0.742) &               0 &                       0 &       0.389782 \\
   enron &                       MLKNN &  (0.056, 0.529, 0.436) &               0 &                       0 &       0.234964 \\
   enron &                        C2AE &  (0.189, 0.665, 0.487) &               0 &                       0 &       0.393941 \\
   enron &                        CLIF &  (0.053, 0.499, 0.381) &        0.002600 &                0.449837 &       0.216576 \\
   enron &                        DELA &  (0.054, 0.493, 0.386) &        0.000126 &                0.021742 &       0.218104 \\
   enron &                    CLML &  (0.054, 0.488, 0.411) &        0.003055 &                0.528421 &       0.220966 \\
 genbase &  GNB-BR &  (0.052, 0.479, 0.538) &               0 &                       0 &       0.237314 \\
 genbase &  GNB-CC &  (0.008, 0.078, 0.091) &               0 &                       0 &       0.037745 \\
 genbase &                       MLKNN &  (0.033, 0.454, 0.331) &               0 &                       0 &       0.170749 \\
 genbase &                        C2AE &  (0.345, 0.823, 0.561) &               0 &                       0 &       0.542500 \\
 genbase &                        CLIF &  (0.046, 0.793, 0.539) &               0 &                       0 &       0.269161 \\
 genbase &                        DELA &  (0.002, 0.020, 0.001) &        0.144566 &                1.000000 &       0.003138 \\
 genbase &                    CLML &  (0.020, 0.239, 0.117) &               0 &                       0 &       0.082065 \\
\end{tabular}
}\end{scalebox}
\label{tab:hv_contributions1}
\end{table*}
\begin{table*}[!h]
\centering
\caption{Lebesgue measure contributions of each $f$ on datasets: emotions to mediamill}
\begin{scalebox}{0.75}{
\begin{tabular}{l|c|c|c|c|c}
     Dataset &                      Method &               Solution & HV Contribution & Normalized Contribution & Geometric Mean \\ \hline
    emotions &  GNB-BR &  (0.410, 0.458, 0.271) &               0 &                       0 &       0.370604 \\
    emotions &  GNB-CC &  (0.265, 0.383, 0.256) &               0 &                       0 &       0.295937 \\
    emotions &                       MLKNN &  (0.268, 0.497, 0.361) &               0 &                       0 &       0.363696 \\
    emotions &                        C2AE &  (0.537, 0.556, 0.488) &               0 &                       0 &       0.526199 \\
    emotions &                        CLIF &  (0.223, 0.412, 0.246) &               0 &                       0 &       0.282547 \\
    emotions &                        DELA &  (0.216, 0.353, 0.214) &        0.005072 &                0.191264 &       0.253682 \\
    emotions &                    CLML &  (0.205, 0.328, 0.224) &        0.021444 &                0.808736 &       0.246669 \\
       flags &  GNB-BR &  (0.443, 0.560, 0.439) &               0 &                       0 &       0.477465 \\
       flags &  GNB-CC &  (0.402, 0.496, 0.360) &               0 &                       0 &       0.415483 \\
       flags &                       MLKNN &  (0.307, 0.302, 0.233) &               0 &                       0 &       0.278388 \\
       flags &                        C2AE &  (1.000, 1.000, 1.000) &               0 &                       0 &       1.000000 \\
       flags &                        CLIF &  (0.298, 0.316, 0.217) &               0 &                       0 &       0.273610 \\
       flags &                        DELA &  (0.271, 0.284, 0.231) &        0.006058 &                0.463227 &       0.260929 \\
       flags &                    CLML &  (0.281, 0.285, 0.205) &        0.007020 &                0.536773 &       0.254035 \\
     IMDB-F &  GNB-BR &  (0.276, 0.875, 0.489) &               0 &                       0 &       0.490350 \\
      IMDB-F &  GNB-CC &  (0.391, 0.892, 0.506) &               0 &                       0 &       0.560657 \\
      IMDB-F &                       MLKNN &  (0.044, 0.929, 0.376) &        0.000180 &                0.003999 &       0.249118 \\
      IMDB-F &                        C2AE &  (0.052, 0.743, 0.324) &        0.044679 &                0.993734 &       0.231745 \\
      IMDB-F &                        CLIF &  (0.049, 0.825, 0.391) &               0 &                       0 &       0.250780 \\
      IMDB-F &                        DELA &  (0.054, 0.831, 0.381) &               0 &                       0 &       0.257251 \\
      IMDB-F &                    CLML &  (0.048, 0.802, 0.358) &        0.000102 &                0.002268 &       0.240283 \\
 tmc2007-500 &  GNB-BR &  (0.598, 0.759, 0.822) &               0 &                       0 &       0.719892 \\
 tmc2007-500 &  GNB-CC &  (0.413, 0.697, 0.784) &               0 &                       0 &       0.608986 \\
 tmc2007-500 &                       MLKNN &  (0.065, 0.351, 0.261) &               0 &                       0 &       0.181723 \\
 tmc2007-500 &                        C2AE &  (0.051, 0.250, 0.145) &               0 &                       0 &       0.122771 \\
 tmc2007-500 &                        CLIF &  (0.040, 0.203, 0.123) &        0.007761 &                1.000000 &       0.099828 \\
 tmc2007-500 &                        DELA &  (0.041, 0.207, 0.128) &               0 &                       0 &       0.102543 \\
 tmc2007-500 &                    CLML &  (0.080, 0.427, 0.321) &               0 &                       0 &       0.222227 \\
   mediamill &  GNB-BR &  (0.338, 0.845, 0.787) &               0 &                       0 &       0.608021 \\
   mediamill &  GNB-CC &  (0.130, 0.708, 0.786) &               0 &                       0 &       0.416542 \\
   mediamill &                       MLKNN &  (0.030, 0.412, 0.283) &               0 &                       0 &       0.152028 \\
   mediamill &                        C2AE &  (0.042, 0.445, 0.285) &               0 &                       0 &       0.174138 \\
   mediamill &                        CLIF &  (0.027, 0.364, 0.216) &        0.035504 &                1.000000 &       0.128975 \\
   mediamill &                        DELA &  (0.031, 0.380, 0.252) &               0 &                       0 &       0.144388 \\
   mediamill &                    CLML &  (0.035, 0.464, 0.353) &               0 &                       0 &       0.178885 \\
\end{tabular}
}\end{scalebox}
\label{tab:hv_contributions2}
\end{table*}

    \section{Extended evaluation of multi-label classification performances}

    Tables \ref{tab:hv_contributions1} and \ref{tab:hv_contributions2} show the expanded view of the loss values ($\mathcal{L}_1$, $\mathcal{L}_2$, and $\mathcal{L}_3$), the Lebesgue contribution ($\lambda(P(f))$), the normalised Lebesgue contribution, and geometric means of each comparative method on each dataset. A zero value on the Lebesgue contribution indicates that a given function is dominated by all other functions on the given dataset, \textit{i.e.}, it does not contribute toward the improvement of the volume over $\boldsymbol{\mathcal{L}}(f(\textbf{X}),\textbf{Y})$.

    \begin{figure*}
    \centering
    \begin{minipage}{0.48\textwidth}
        \includegraphics[width=0.85\textwidth]{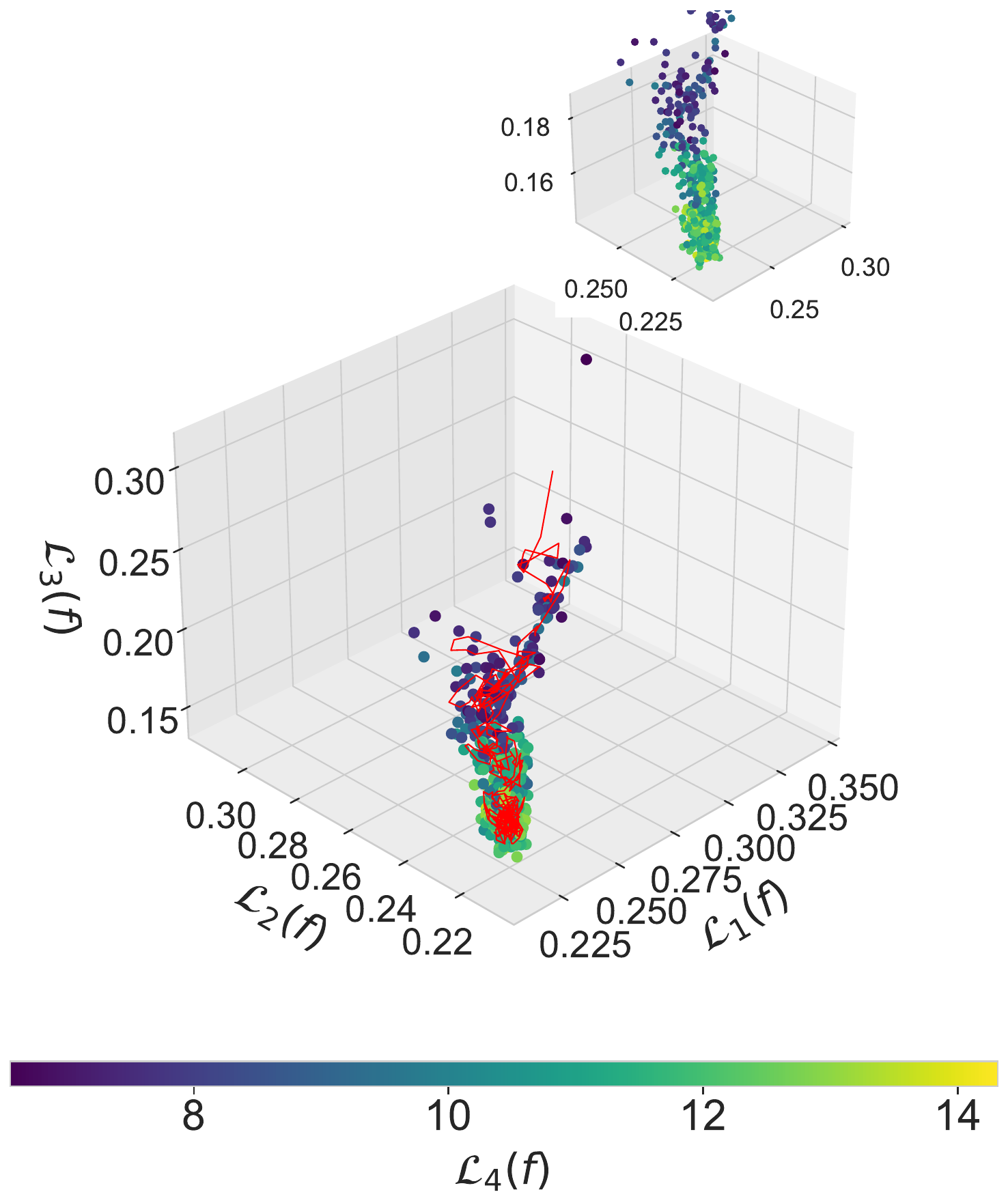}
        \centerline{(c) flags}
    \end{minipage}
    \begin{minipage}{0.48\textwidth}
        \includegraphics[width=0.85\textwidth]{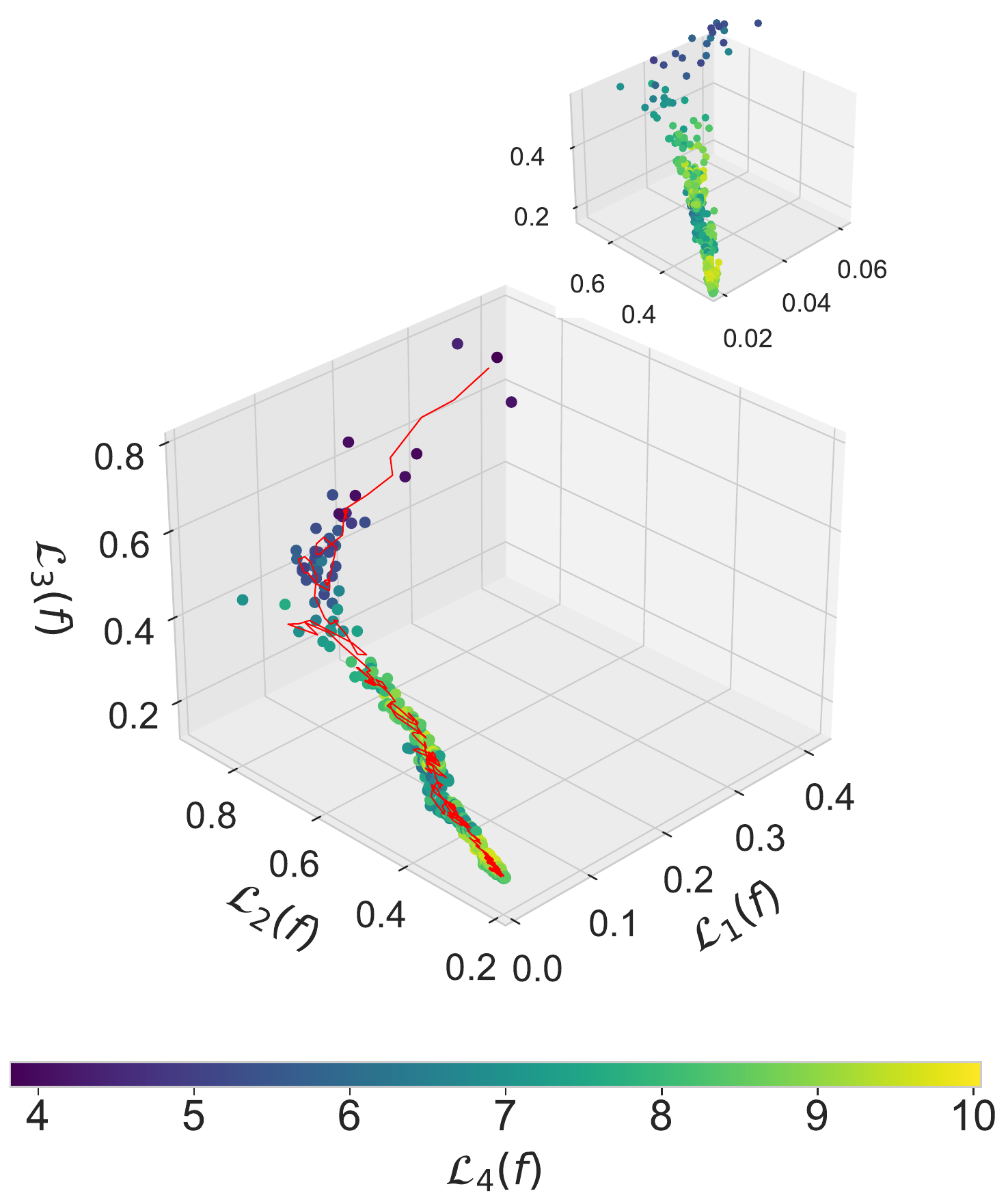}
        \centerline{(d) genbase}
    \end{minipage}
    
    \begin{minipage}{0.48\textwidth}
        \includegraphics[width=0.85\textwidth]{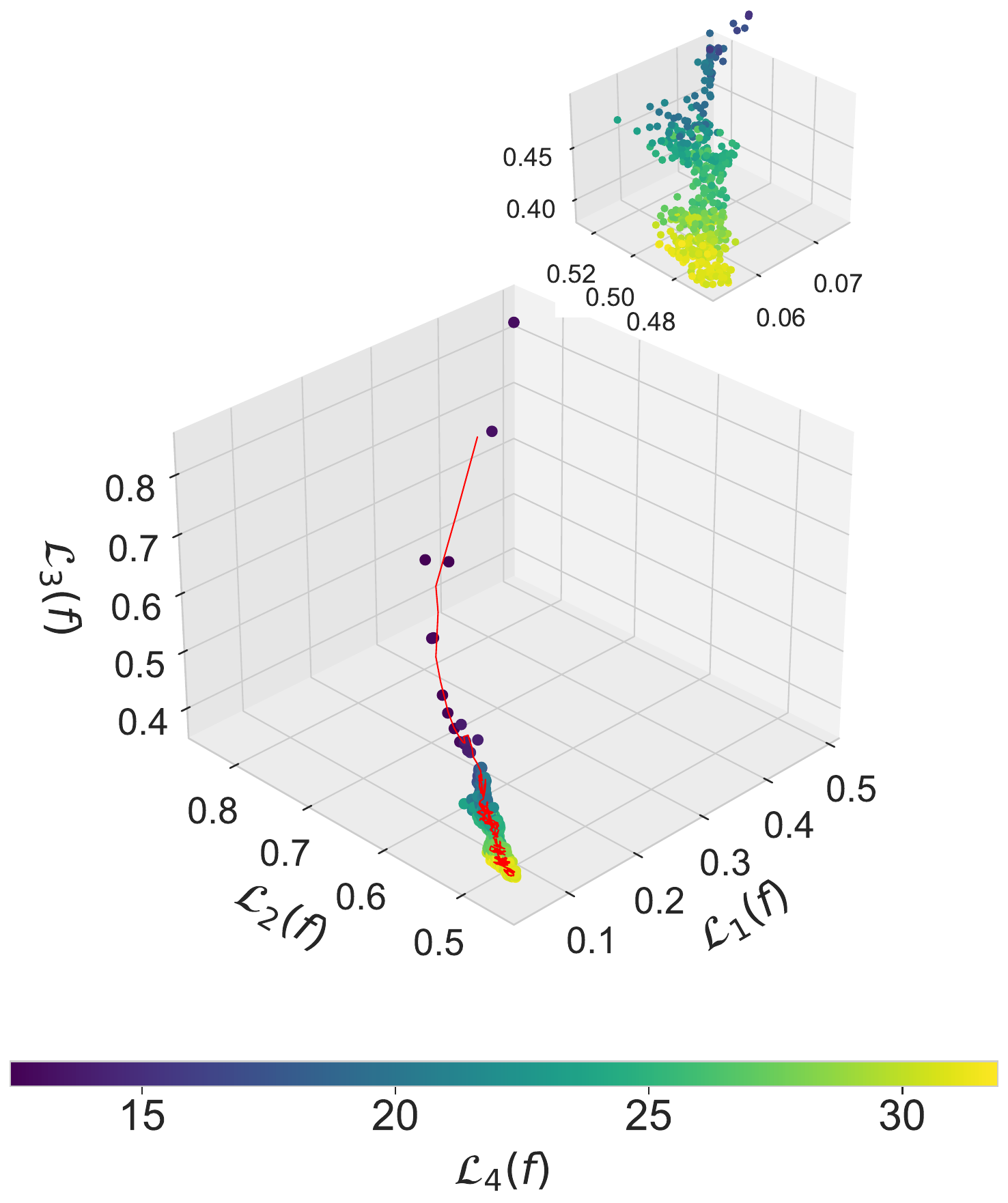}
        \centerline{(e) enron}
    \end{minipage}
    \begin{minipage}{0.48\textwidth}
        \includegraphics[width=0.85\textwidth]{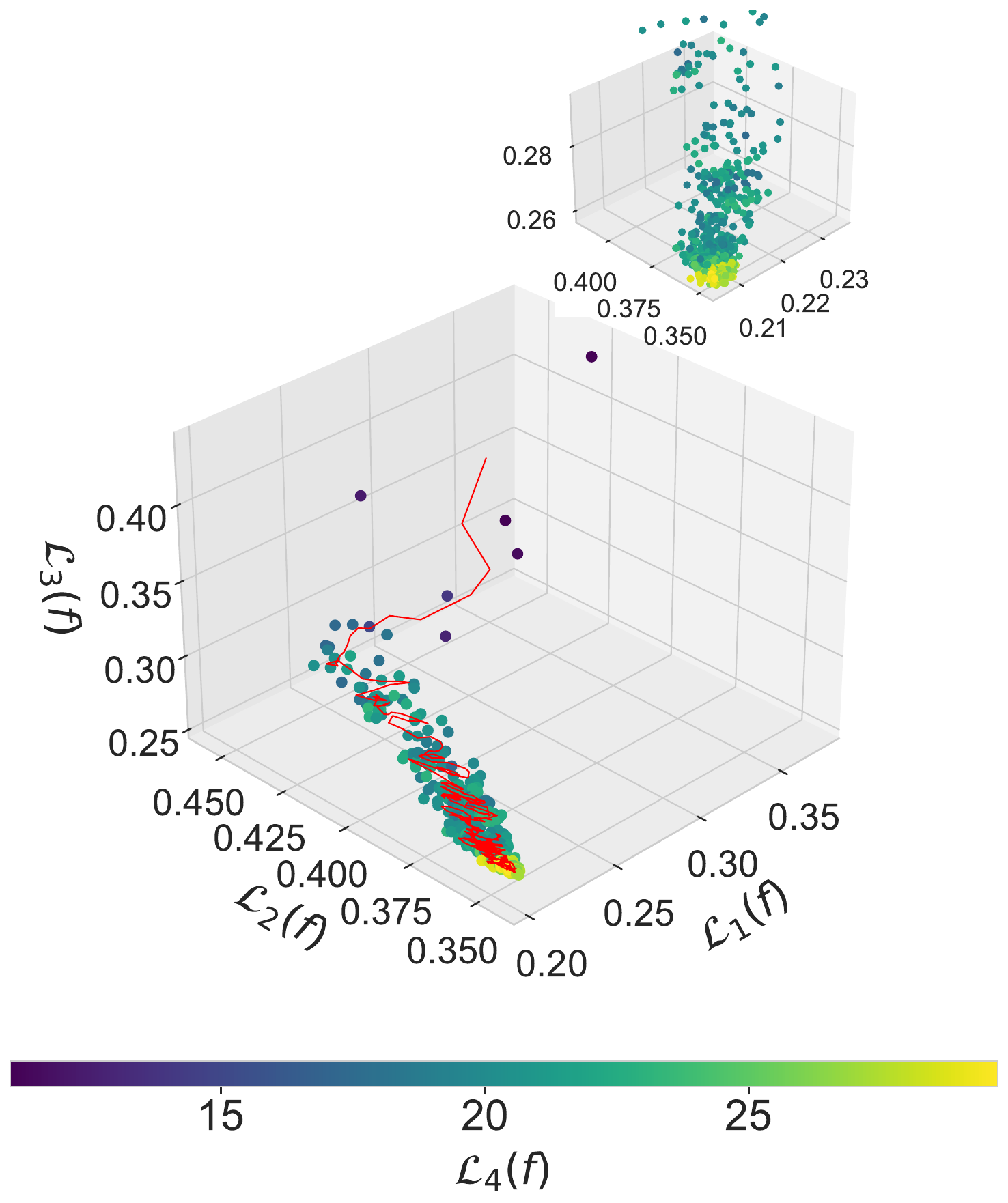}
        \centerline{(f) yeast}
    \end{minipage}

    \caption{The training curves of CLML on datasets flags through yeast (c-f).}
    \label{training_curves_supplement_1}
    \end{figure*}

    \begin{figure*}
    \centering

    \begin{minipage}{0.48\textwidth}
        \includegraphics[width=0.85\textwidth]{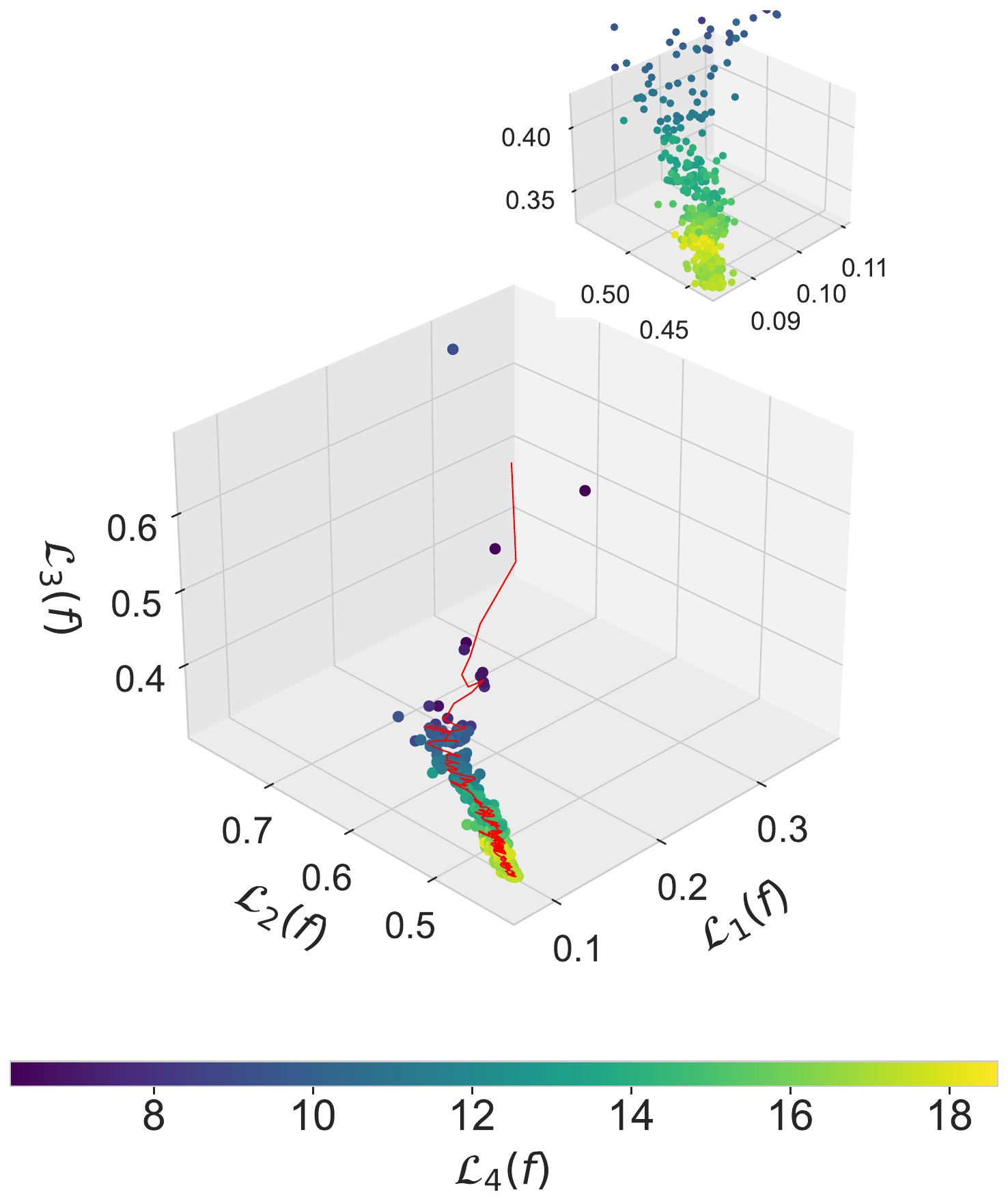}
        \centerline{(g) tmc2007-500}
    \end{minipage}
    \begin{minipage}{0.48\textwidth}
        \includegraphics[width=0.85\textwidth]{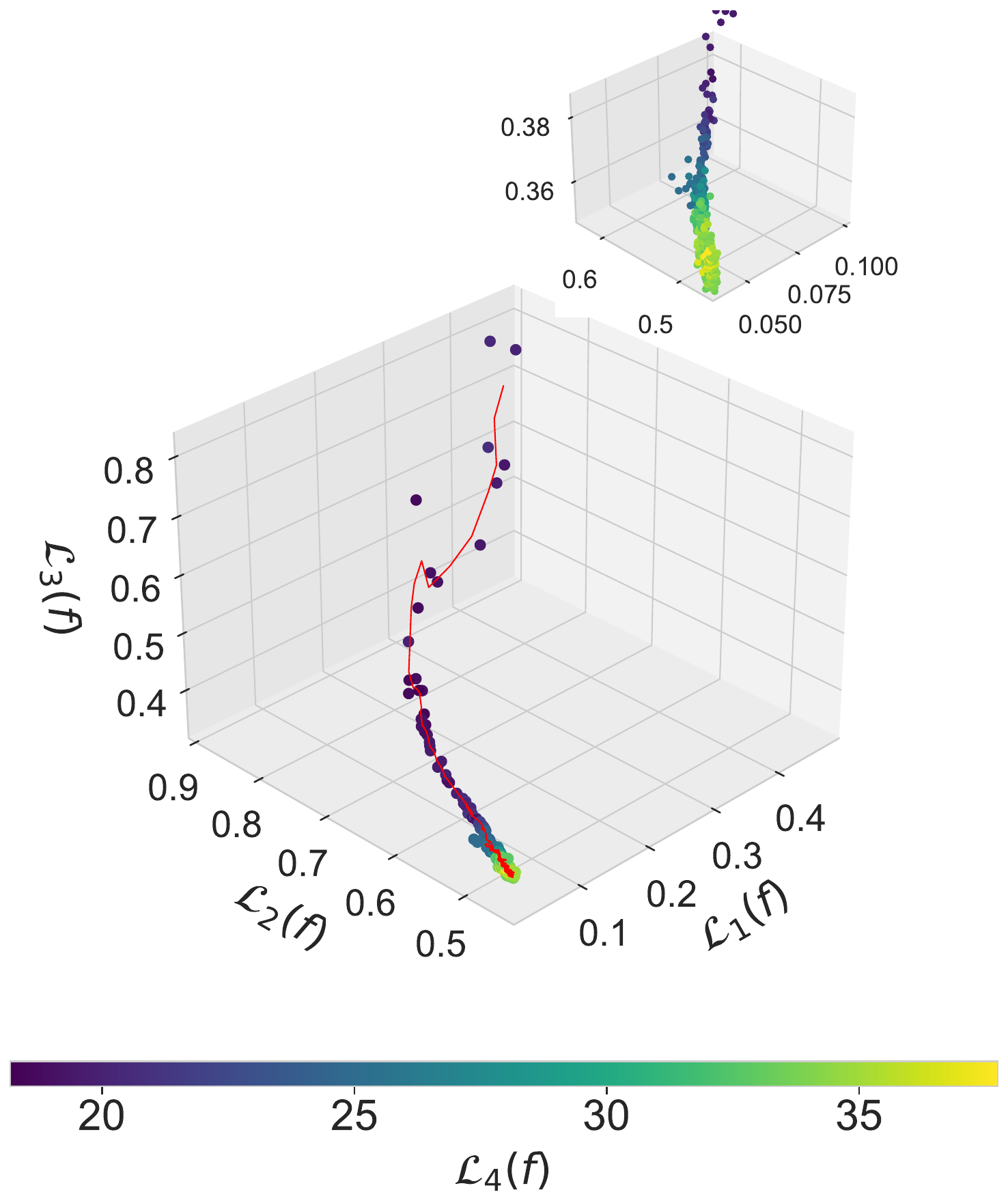}
        \centerline{(h) mediamill}
    \end{minipage}
    
    \begin{minipage}{0.48\textwidth}
        \includegraphics[width=0.85\textwidth]{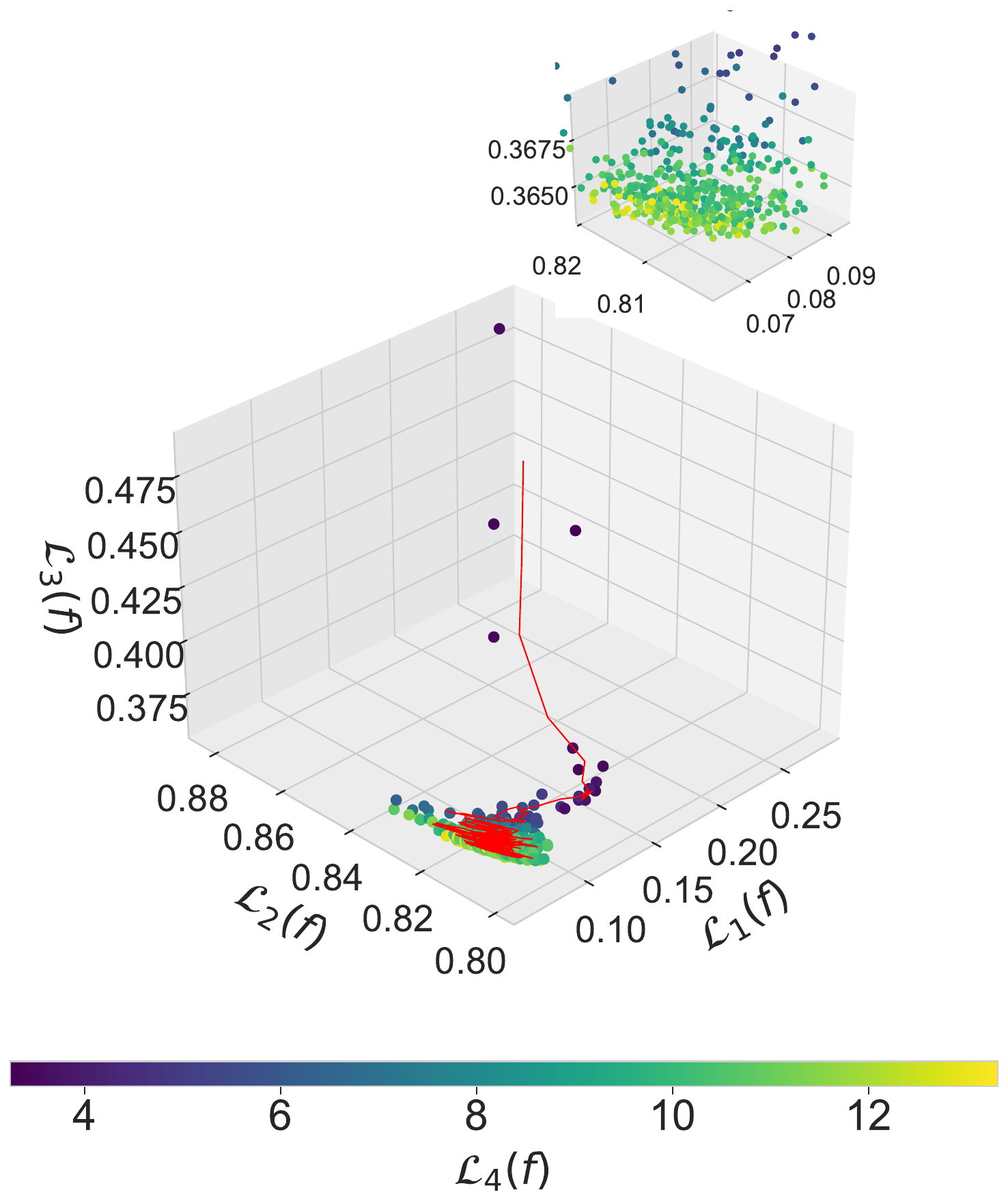}
        \centerline{(i) IMDB-F}
    \end{minipage}
    \caption{The training curves of CLML on datasets tmc2007-500 through IMDB-F (g-i).}
    \label{training_curves_supplement2}
    \end{figure*}

\section{Extended results of training curves against surrogate loss}\label{surrogate_extended}

    Figures \ref{training_curves_supplement_1} and \ref{training_curves_supplement2} plot the training curves of CLML on datasets flags through IMDB-F (c-i).

    \section{Useful definitions, corollaries, and lemmas}\label{usefuldefinitionscorollariesandlemmas}

    \begin{definition}[Metric Risk] We define the conditional and Bayes risk of $\mathcal{L}_1, \mathcal{L}_2,$ and $\mathcal{L}_3$ given $\textbf{X}$ and \textbf{Y} for $i=1,2,3$ as follows:

    \begin{equation}
    \begin{gathered}
        R_{\mathcal{L}_i}(f) = \frac{1}{N}\sum_{j=1}^{N} \sum_{\textbf{y}_j\in\mathcal{Y}}p(\textbf{y}_j|\textbf{x}_j)
        \mathcal{L}_i(f(\textbf{x}_j),\textbf{y}_j)\\
         R^B_{\mathcal{L}_i}(f) = \frac{1}{N}\sum_{j=1}^{N}
       \underset{f'}{\text{inf}}[\sum_{\textbf{y}_j\in\mathcal{Y}}p(\textbf{y}_j|\textbf{x}_j)
        \mathcal{L}_i(f'(\textbf{x}_j),\textbf{y}_j)]
    \end{gathered}
    \end{equation}
    \end{definition}
    The overall risk and Bayes risk is given by:
    \begin{equation}
    \begin{gathered}
        R_{\boldsymbol{\mathcal{L}}}(f) = (R_{\mathcal{L}_1}(f),R_{\mathcal{L}_2}(f),R_{\mathcal{L}_3}(f)) \\ R^B_{\boldsymbol{\mathcal{L}}}(f) = (R^B_{\mathcal{L}_1}(f),R^B_{\mathcal{L}_2}(f),R^B_{\mathcal{L}_3}(f))
    \end{gathered}
    \end{equation}

    \begin{corollary}[Below-bounded and Interval] The Lebesgue measure is naturally below-bounded and interval, \textit{i.e.}, for any $F, F'$ and $R, R' \subset Z$, $\lambda(H(F,R)) = \lambda(H(F',R'))$ or $|\lambda(H(F,R)) - \lambda(H(F',R'))| > 0$, which is naturally inherited from the underlying below-bounded and interval properties of $\mathcal{L}_1, \mathcal{L}_2$ and $\mathcal{L}_3$ following \citet{pmlr-v19-gao11a}.
    \end{corollary}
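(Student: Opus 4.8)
The plan is to reduce the stated implication to two ingredients and then combine them. The first ingredient is a purely structural fact about the Pareto optimal set: that $\mathbb{P}^B$ contains, for each loss dimension $i$, a function whose conditional risk equals the corresponding single-loss Bayes risk $R^B_{\mathcal{L}_i}$. The second ingredient is a hypervolume-approximation argument showing that the hypothesis $\lambda(H(F^{(n)},R))\to\lambda(H(\mathbb{P}^B,R))$ forces the per-coordinate infima attained over $F^{(n)}$ to converge to those same Bayes risks. Granting both, the three required sequences are produced by selecting, at each stage $n$ and for each $i$, the member of $F^{(n)}$ minimising $R_{\mathcal{L}_i}$, which then converges to $R^B_{\mathcal{L}_i}$.

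First I would establish the structural fact, essentially recasting the theorem's domination case-split as a constructive statement. Fix $v\in\{1,2,3\}$ and let $f^\star$ satisfy $R_{\mathcal{L}_v}(f^\star)=R^B_{\mathcal{L}_v}(f^\star)$, i.e. $f^\star$ is a Bayes predictor for the single loss $\mathcal{L}_v$. If $f^\star\in\mathbb{P}^B$ there is nothing to show; otherwise $f^\star$ is dominated by some $g\in\mathbb{P}^B$, so $\mathcal{L}_i(g)\le\mathcal{L}_i(f^\star)$ for all $i$ with strict inequality in at least one coordinate. Since $f^\star$ already attains the infimum of $\mathcal{L}_v$, the $v$-th coordinate cannot be strictly smaller under $g$; hence $\mathcal{L}_v(g)=\mathcal{L}_v(f^\star)$, so $g$ is itself a Bayes predictor for $\mathcal{L}_v$ and lies in $\mathbb{P}^B$. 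Applying this to $v=1,2,3$ yields the functions $f,f',f''\in\mathbb{P}^B$ with $R_{\mathcal{L}_i}=R^B_{\mathcal{L}_i}$ named in the statement.

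Next I would convert hypervolume convergence into coordinate-wise convergence. The Lebesgue measure is monotone in $F$, since enlarging the dominated region cannot decrease $\lambda(H(\cdot,R))$, and by the below-bounded and interval property (the Corollary in the appendix, inherited from $\mathcal{L}_1,\mathcal{L}_2,\mathcal{L}_3$) any newly reached portion of the objective box contributes positive measure. The key estimate is that if $\inf_{f\in F^{(n)}}\mathcal{L}_v(f)$ stays bounded away from $R^B_{\mathcal{L}_v}$ by some $\delta>0$, then the portion of $H(\mathbb{P}^B,R)$ with $z_v<R^B_{\mathcal{L}_v}+\delta$ is never contained in $H(F^{(n)},R)$, because domination in coordinate $v$ would require some $f\in F^{(n)}$ with $\mathcal{L}_v(f)<R^B_{\mathcal{L}_v}+\delta$. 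Since $R=\{1\}^3$ is the worst corner, this uncovered slab has volume bounded below by a positive constant times $\delta$, contradicting $\lambda(H(F^{(n)},R))\to\lambda(H(\mathbb{P}^B,R))$. Hence $\inf_{f\in F^{(n)}}\mathcal{L}_v(f)\to R^B_{\mathcal{L}_v}$ for each $v$, and choosing $f^{(n)}_v\in\arg\min_{f\in F^{(n)}}R_{\mathcal{L}_v}(f)$ delivers the three stated convergences.

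The main obstacle is making the slab estimate rigorous: I must rule out that a persistent deficit in one coordinate is compensated by over-covering elsewhere, by showing the uncovered region near the extreme vertex of the Pareto front genuinely carries volume bounded away from zero. This is precisely where the interval property of the losses and the choice $R=\{1\}^3$ are needed, ensuring every coordinate slab has positive thickness toward the reference and positive cross-sectional area (provided the $v$-minimising Pareto point has its remaining coordinates strictly below $1$). A secondary subtlety is the tacit assumption in the structural step that every dominated Bayes predictor is dominated by an element of $\mathbb{P}^B$; I would discharge this by noting that the objective values lie in the compact box $[0,1]^3$ and that $\mathbb{P}^B$ is exactly the non-dominated frontier, so no dominance chain can descend indefinitely without terminating in $\mathbb{P}^B$.
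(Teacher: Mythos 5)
Your proposal does not address the statement you were asked to prove. Everything in it --- the structural claim that $\mathbb{P}^B$ contains a Bayes predictor per loss coordinate, the slab estimate converting $\lambda(H(F^{(n)},R))\to\lambda(H(\mathbb{P}^B,R))$ into coordinate-wise convergence of infima, the selection of $f^{(n)}_v\in\arg\min_{f\in F^{(n)}}R_{\mathcal{L}_v}(f)$ --- is aimed at Theorem 4.5 (A Consistent Lebesgue Measure), not at the corollary on the below-bounded and interval properties. The corollary makes no mention of $\mathbb{P}^B$, of sequences $F^{(n)}$, or of Bayes risks; it asserts only that for any $F,F'$ and $R,R'\subset Z$, either $\lambda(H(F,R))=\lambda(H(F',R'))$ or $|\lambda(H(F,R))-\lambda(H(F',R'))|>0$. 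Nothing in your argument states or derives this, so as a proof of the given statement it is simply the wrong target.

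What the corollary actually requires is very little, which is presumably why the paper offers no proof beyond the remark that the property is ``naturally inherited'' from $\mathcal{L}_1,\mathcal{L}_2,\mathcal{L}_3$. The dichotomy ``equal or differing by a strictly positive amount'' is automatic for any pair of finite real numbers, so the only substantive content is that $\lambda(H(F,R))$ is a well-defined, finite, nonnegative real: since each $\mathcal{L}_i$ takes values in $[0,1]$ and $R=\{1\}^3$, the set $H(F,R)$ is contained in the unit cube $[0,1]^3$, hence $0\le\lambda(H(F,R))\le 1<\infty$, and nonnegativity gives the below-bound while finiteness makes differences meaningful (the ``interval'' property in the sense of \citet{pmlr-v19-gao11a}). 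That one-line observation is the entire proof. Ironically, your Theorem 4.5 argument \emph{invokes} this corollary in its slab estimate (``by the below-bounded and interval property \ldots any newly reached portion of the objective box contributes positive measure''), so you have used the statement you were supposed to establish rather than established it. If you intended to submit a proof of Theorem 4.5, the argument should be evaluated against that theorem instead; as a proof of this corollary it leaves the (admittedly short) required argument entirely unwritten.
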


    \begin{lemma}[The Lebesgue Contribution Equals Lebesgue Improvement]\label{lebesgue} Let $\lambda(H(F,R))$ denote the Lebesgue measure over a set $F$. The overall improvement toward the minimisation of $\mathcal{L}_1, \mathcal{L}_2,$ and $\mathcal{L}_3$, is prescribed by the volume of $\lambda(H(F,R))$, which can be expressed as the sum of contributions of losses for each function representation $f\in F$:

    \begin{equation}
    \begin{gathered}
        \lambda(H(F,R)) = \sum_{f\in F}\lambda(P(f)) = \\\sum_{f\in F}\int_{\mathbb{R}^o}\boldsymbol{1}_{H(\{f\},R)\backslash H(F\backslash\{f\},R)}(\textbf{z})d\textbf{z}
    \end{gathered}
    \end{equation}
        
    \end{lemma}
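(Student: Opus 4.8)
The plan is to reduce the identity to a statement about the Lebesgue measure of a disjoint decomposition of $H(F,R)$. First I would record the union identity $H(F,R) = \bigcup_{f\in F} H(\{f\},R)$, which follows directly from the definition in Eq.~\ref{lebesguemeasure}: a point $\textbf{z}$ lies in $H(F,R)$ precisely when there exist $f\in F$ and $\textbf{r}\in R$ with $\boldsymbol{\mathcal{L}}(f(\textbf{X}),\textbf{Y}) \prec \textbf{z} \prec \textbf{r}$, and this is exactly the membership condition for $\textbf{z}\in H(\{f\},R)$ for that same $f$. Since each $H(\{f\},R)$ is a finite union of axis-aligned boxes it is measurable, hence so is $H(F,R)$, and the identity also shows that $H$ is monotone in its first argument (enlarging the set of functions can only enlarge the dominated region).

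Next I would dispatch the second equality in the lemma, which is immediate from the definition $\lambda(P(f)) = \int_{\mathbb{R}^o}\textbf{1}_{P(f)}(\textbf{z})\,d\textbf{z}$ together with $P(f) = H(\{f\},R)\backslash H(F\backslash\{f\},R)$. I would then verify that the exclusive regions $\{P(f)\}_{f\in F}$ are pairwise disjoint: if $\textbf{z}\in P(f)\cap P(f')$ with $f\neq f'$, then $\textbf{z}\in H(\{f\},R)\subseteq H(F\backslash\{f'\},R)$ by monotonicity, contradicting $\textbf{z}\notin H(F\backslash\{f'\},R)$. Disjointness lets me add indicators, so
\[ \sum_{f\in F}\lambda(P(f)) = \int_{\mathbb{R}^o}\sum_{f\in F}\textbf{1}_{P(f)}(\textbf{z})\,d\textbf{z} = \lambda\Big(\bigcup_{f\in F}P(f)\Big). \]
Thus the substantive first equality in the lemma collapses to establishing $\lambda(\bigcup_{f\in F}P(f)) = \lambda(H(F,R))$.

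This last step is where I expect the real difficulty. Writing $D := H(F,R)\backslash\bigcup_{f\in F}P(f)$ for the set of points dominated simultaneously by two or more elements of $F$, the claim is equivalent to $\lambda(D)=0$. The obstacle is that the pairwise overlaps $H(\{f\},R)\cap H(\{f'\},R)$ are themselves axis-aligned boxes, which in general carry \emph{positive} measure, so $D$ need not be null and the symmetric exclusive contributions $P(f)$ under-count the jointly dominated volume. To close this gap I would either (i) derive or impose a genericity condition forcing the overlaps to be negligible, or, more robustly, (ii) replace the symmetric decomposition by an ordering-dependent one: fix an enumeration $f_1,\dots,f_{|F|}$ and set $\widetilde P_k := H(\{f_k\},R)\backslash H(\{f_1,\dots,f_{k-1}\},R)$. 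The $\widetilde P_k$ are disjoint by construction and telescope, so $\sum_k \lambda(\widetilde P_k)=\lambda(H(F,R))$ holds with no null-set caveat, and I would then reconcile this genuine partition with the symmetric $P(f)$ appearing in the statement (for instance by exploiting mutual non-domination of the retained set, so that the boundaries along which overlaps occur are lower-dimensional). Showing that the jointly dominated region does not affect the total volume is the crux of the argument.
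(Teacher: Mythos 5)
You have not closed the final step, but the step you could not close is precisely the one that fails: the identity in the lemma is false in general, and the paper's own proof glosses over exactly the point you isolated. The paper's argument consists of rewriting $\textbf{1}_{H(F,R)}$ as the indicator of $\cup_{f\in F}\bigl(H(\{f\},R)\backslash H(F\backslash\{f\},R)\bigr)$ and then summing over the (genuinely disjoint) pieces; the unjustified move is the claim that this union equals $H(F,R)$. As you observe, any point dominated by two or more members of $F$ lies in none of the exclusive regions $P(f)$, so $\cup_{f\in F}P(f)=H(F,R)\setminus D$ where $D$ is the jointly dominated set, and $D$ has positive measure in general. Concretely, take $o=2$, $R=\{(1,1)\}$ and two mutually non-dominating loss vectors $(0.2,0.6)$ and $(0.6,0.2)$: each box $H(\{f\},R)$ has measure $0.32$, their overlap $[0.6,1]\times[0.6,1]$ has measure $0.16$, so $\lambda(H(F,R))=0.48$ while $\sum_{f\in F}\lambda(P(f))=0.16+0.16=0.32$. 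The correct general relation is $\lambda(H(F,R))\geq\sum_{f\in F}\lambda(P(f))$, with equality only when the pairwise overlaps are null (for instance $|F|=1$).

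Your proposed repair via the ordered telescoping sets $\widetilde P_k := H(\{f_k\},R)\backslash H(\{f_1,\dots,f_{k-1}\},R)$ is sound and does give $\sum_k\lambda(\widetilde P_k)=\lambda(H(F,R))$, but, as you anticipate, it cannot be reconciled with the symmetric $P(f)$ of the statement: in the example above $\widetilde P_2=P(f_2)$ but $\widetilde P_1=H(\{f_1\},R)\supsetneq P(f_1)$ by a set of measure $0.16$, and mutual non-domination of the retained set does not make the overlaps lower-dimensional (the two points above are mutually non-dominating). So the conclusion is not that you are missing an idea but that the equality must either be weakened to an inequality or the contributions redefined (by inclusion--exclusion or by the ordered decomposition) for it to hold. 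Everything else in your write-up --- the union identity $H(F,R)=\cup_{f\in F}H(\{f\},R)$, measurability, the second equality, and the pairwise disjointness of the $P(f)$ --- is correct and is argued more carefully than the paper's own two-line substitution.
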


    \begin{proof}
        Consider a redefined Lebesgue measure as the union of non-overlapping (disjoint) contribution regions for each $f\in F$. By substitution:
        \begin{equation}
        \begin{gathered}
            \lambda(H(F,R)) = \int_{\mathbb{R}^o}\textbf{1}_{H(F,R)}(\textbf{z})d\textbf{z} = \\\int_{\mathbb{R}^o}\textbf{1}_{\cup_{f\in F}H(\{f\},R)\backslash H(F\backslash\{f\},R)}(\textbf{z})d\textbf{z}
        \end{gathered}
        \end{equation}
        The integral can be re-written to express the sum over disjoint contribution regions:
        \begin{equation}
            \begin{gathered}\int_{\mathbb{R}^o}\textbf{1}_{\cup_{f\in F}H(\{f\},R)\backslash H(F\backslash\{f\},R)}(\textbf{z})d\textbf{z} = \\\sum_{f\in F}\int_{\mathbb{R}^o}\textbf{1}_{H(\{f\},R)\backslash H(F\backslash\{f\},R)}(\textbf{z})d\textbf{z} = \sum_{f\in F}\lambda(P(f)).
            \end{gathered}
        \end{equation}
    \end{proof}

\end{document}